\newtheorem{theorem}{Theorem}
\newtheorem{lemma}{Lemma}
\newtheorem{definition}{Definition}
\newtheorem{corollary}{Corollary}
\newtheorem{example}{Example}
\newcommand{\eq}[1]{(\ref{eq:#1})}
\newcommand{\lem}[1]{\hyperref[lem:#1]{Lemma~\ref*{lem:#1}}}
\newcommand{\thm}[1]{\hyperref[thm:#1]{Theorem~\ref*{thm:#1}}}
\newcommand{\algo}[1]{\hyperref[algo:#1]{Algorithm~\ref*{algo:#1}}}
\newcommand{\cor}[1]{\hyperref[cor:#1]{Corollary~\ref*{cor:#1}}}
\newcommand{\secc}[1]{\hyperref[sec:#1]{Section~\ref*{sec:#1}}}
\def\>{\rangle}
\def\<{\langle}
\renewcommand\appendix{\par
    \setcounter{section}{0}
    \setcounter{subsection}{0}
    \gdef\thesection{\Alph{section}.}}
\title{Bounded Memory Adversarial Bandits with Composite Anonymous Delayed Feedback}
\author[1,2]{Zongqi Wan\footnote{Email:wanzongqi20s@ict.ac.cn}}
\author[1,2]{Xiaoming Sun\footnote{Email:sunxiaoming@ict.ac.cn}}
\author[1,2]{Jialin Zhang\footnote{Email:zhangjialin@ict.ac.cn}}
\affil[1]{Institute of Computing Technology, Chinese Academy of Sciences, Beijing, China}
\affil[2]{University of Chinese Academy of Sciences, Beijing, China}
\date{}
\begin{document}

\maketitle

\begin{abstract}
We study the adversarial bandit problem with composite anonymous delayed feedback. In this setting, losses of an action are split into $d$ components, spreading over consecutive rounds after the action is chosen. And in each round, the algorithm observes the aggregation of losses that come from the latest $d$ rounds. Previous works focus on oblivious adversarial setting, while we investigate the harder non-oblivious setting. We show non-oblivious setting incurs $\Omega(T)$ pseudo regret even when the loss sequence is bounded memory. However, we propose a wrapper algorithm which enjoys $o(T)$ policy regret on many adversarial bandit problems with the assumption that the loss sequence is bounded memory. Especially, for $K$-armed bandit and bandit convex optimization, we have $\mathcal{O}(T^{2/3})$ policy regret bound. We also prove a matching lower bound for $K$-armed bandit. Our lower bound works even when the loss sequence is oblivious but the delay is non-oblivious. It answers the open problem proposed in \cite{wang2021adaptive}, showing that non-oblivious delay is enough to incur $\tilde{\Omega}(T^{2/3})$ regret.
\end{abstract}

\section{Introduction}
Multi-armed bandit is a widely studied problem. It can be formulated by a multi-rounds game between two players,  an adversary and a learner. In $t$-th round, the adversary assigns each action $a\in [K]$ a loss $l_t(a)$, and simultaneously, the learner chooses an action $a_t\in [K]$. It then incurs loss $l_t(a_t)$. The learner can observe the loss this round of the action it just chose $\left(\mbox{i.e., }l_t(a_t) \right)$, but can not observe the loss of other actions. This is so-called \emph{bandit feedback}. Learner’s goal is to minimize its total loss, We usually choose a posteriori best fixed action as the comparison, so that the goal is to minimize the \emph{expected regret}, defined as
\[\mathbb{E}[R_T] = \mathbb{E}\left[\sum_{t=1}^T l_t(a_t)-\min_{y \in \mathcal{A}}\sum_{t=1}^T l_t(y)\right]\]

Bandits problem has a wide range of applications in the industry, including medical trials, recommendation systems, computational ads., tree search algorithms..(\cite{kocsis2006discounted,chapelle2014simple,villar2015multi,silver2016mastering,lei2017actor}). In the standard formulation of the bandit problem, each round the learner observes the precise feedback immediately, and adjusts its strategy afterward according to the immediate feedback. However, in many real-world situations, this assumption can not be satisfied. The total impact of an action may not be observed immediately. In contrast, the impact may spread over an interval after the action has been played. For instance, consider the advertising problem. People do not always click on the website or buy the product after seeing the ads immediately, and the feedback(i.e., click number) the recommender observed may be the aggregation of the impact of several ads recommended before. 

To address the above scenarios, \cite{pike2018bandits} proposed a stochastic bandit model with anonymous feedback. In their setting, round $t$ is related with a delay time $d(t)$, which is drawn from some i.i.d distribution, and the feedback learner can observe at the end of round $t$ is the aggregation $\sum_{s+d(s)=t} l_s(a_s)$. They showed that there is a learner achieving $\tilde{\mathcal{O}}\left(\sqrt{KT}+K\mathbb{E}(d)\right)$ expected regret. \cite{cesa2018nonstochastic} generalized the model to adversarial bandits, where the loss in their model is a composition of constant parts, $l_t(a) = \sum_{s=0}^{d-1}l_t^{(s)}(a)$ where $l_t^{(s)}(a)$ means the part of loss which will delay $s$ rounds. The feedback learner observes at the end of $t$-th round is $l_t^o(a_t)=\sum_{s=0}^{d-1}l_{t-s}^{(s)}(a_{t-s})$. In their model, the loss sequence and delay are both oblivious, which means they can not be adjusted according to the learner's action history. They proposed a mini-batch wrapper algorithm, converting a standard bandit algorithm to the one that can handle composite anonymous delayed feedback. They applied this wrapper on EXP3 algorithm\cite{auer2002nonstochastic}, achieving $\tilde{\mathcal{O}}(\sqrt{dKT})$ expected regret of multi-armed bandit problem with composite anonymous feedback. Their wrapper can also be applied on bandit convex optimization problem \cite{flaxman2005online}, in which the action set is a convex body, and loss functions are bounded convex functions on this convex body. They applied their wrapper on the algorithm proposed in \cite{saha2011improved}, achieving $\tilde{\mathcal{O}}(d^{1/3}(KT)^{2/3})$ regret. Subsequently, \cite{wang2021adaptive} studied the situation that delay is determined by a non-oblivious adversary. That is, though the loss sequence is oblivious, an adversary can split the loss into parts according to the learner's action history. In this setting, they modified standard EXP3 algorithm so that it achieves $\tilde{\mathcal{O}}((d+\sqrt{K})T^{2/3})$ regret for $K$-armed bandit. Different from other previous algorithms, it does not require any prior knowledge of delay $d$. Though they believed that their algorithm is asymptotic optimal for the adversary with oblivious loss and non-oblivious delay, we only have a $\Omega(\sqrt{T})$ regret lower bound from the classical multi-armed bandit problem. How to derive a matching regret lower bound is one of the future research problems in their work.

The existing works on composite anonymous feedback setting all assume that the loss sequence is oblivious, which does not always hold in the real world. For example, consider the case that one is involved in a repeated game with other players, it is natural that others will adjust their strategies according to his action history. This will make the loss of each pure strategy non-oblivious. In $K$-armed bandit and bandit convex optimization problem without delay, even if we consider non-oblivious loss, we still have $\tilde{\Theta}(\sqrt{T})$ pseudo regret. However, things become very different when we consider composite anonymous delayed feedback.

\paragraph{Contribution.}We studied the bandits with composite anonymous feedback under \emph{non-oblivious} setting. In our model, we allow both non-oblivious delay and loss sequences. Since when the loss sequence is non-oblivious, the common regret can be generalized to different performance metrics. We first discuss which performance metric to use in our setting. We show that any learner can not achieve sublinear \emph{external pseudo regret} under our non-oblivious setting. Inspired by \cite{arora2012online}, we turn to a more reasonable metric called \emph{policy regret}. In non-oblivious setting, policy regret is $\Omega(T)$ even without delayed feedback. So \cite{arora2012online} considered a weaker adversary which has \emph{bounded memory}. They proved $o(T)$ policy regret bounds for many bandit problems with bounded memory assumption. Especially, they proved $\mathcal{O}(T^{2/3})$ policy regret bounds for $K$-armed bandit. Different from the pseudo regret metric, we find that the policy regret does not get worse with the introduction of delay. We prove that the simple mini-batch wrapper can generate $o(T)$ \emph{policy regret} algorithms for many bandit problems with non-oblivious delays and \emph{bounded memory} loss sequence in composite anonymous feedback setting. Especially, it can generate $\mathcal{O}(T^{2/3})$ policy regret algorithms for $K$-armed bandits and bandit convex optimization in our setting. Moreover, this mini-batch wrapper does not require any prior knowledge of $d$. Meanwhile, pseudo regret is still $\Omega(T)$ even when we restrict the loss sequence to be bounded memory. Since policy regret is the same as common regret in the oblivious setting, our upper bound can be seen as a generalization of the result in \cite{wang2021adaptive}. Furthermore, we prove a matching lower bound for our model. In fact, we prove a stronger lower bound. Even if the loss sequence is generated by an oblivious adversary, any learner can only obtain $\tilde{\Omega}(T^{2/3})$ regret. Our lower bound answered the problem proposed in \cite{wang2021adaptive}, showing that non-oblivious delay on its own is enough to cause $\tilde{\Theta}(T^{2/3})$ regret.

To summarize the above results, our study provides a complete answer to the non-oblivious composite anonymous feedback setting.

\paragraph{More Related Work.}Delay setting was first considered in \cite{gergely2010online}, in which they assumed each feedback is delayed by a constant $d$, and they posed a $\tilde{\mathcal{O}}(\sqrt{dKT})$ regret algorithm. \cite{joulani2013online} generalized this result to the partial monitoring setting. \cite{quanrud2015online} first considered the non-uniform delay setting, where the delay size of each round can be different. Let $D$ be the sum of all delay sizes, they proved a $\mathcal{O}(\sqrt{(D+T)\log K})$ regret bound in \emph{full feedback} online learning setting. In recent years, a series of works have generalized this work to the bandit feedback setting and keep developing more instance-dependent upper bound(\cite{li2019bandit,thune2019nonstochastic,bistritz2019exp3,zimmert2020optimal,gyorgy2021adapting,cella2020stochastic}). All above works assumed non-anonymous and separated feedback. That is, the learner observes every single feedback rather than their aggregation. Besides, \cite{li2019bandit} studied an unknown delay setting where the learner does not know which round the feedback comes from. Their feedback is also separated, and the learner knows which action is related to the feedback.

Policy regret and bounded memory assumption are proposed in \cite{arora2012online}. They are used in many online learning and bandit literature, including \cite{anava2015online,heidari2016tight,arora2018policy,jaghargh2019consistent}.
\section{Model Setting}
Adversarial bandit problem is a repeated game between a learner and an adversary. There are $T$ rounds in this game. In $t$-th round, the learner chooses an action $a_t \in \mathcal{A}$, the adversary chooses a function $l_t \in \mathcal{L}$ mapping $\mathcal{A}$ to a normalized bounded interval $[0,1]$. Then the learner incurs loss $l_t(a_t)$. The learner's target is to minimize its expected cumulative regret.

Here we formulate two classical bandit problems into the instance of adversarial bandit problem.
\begin{example}[$K$-armed bandit]
$K$-armed bandit is a special case of adversarial bandit problem where $\mathcal{A}=[K]\triangleq\{1,2,\cdots,K\}$. And $\mathcal{L}$ is the set of all functions mapping $[K]$ to $[0,1]$.
\end{example}

\begin{example}[bandit convex optimization]
Bandit convex optimization is also a particular case of adversarial bandit problem when $\mathcal{A}$ is a convex body of $\mathbb{R}^K$. And $\mathcal{L}$ contains all $L$-Lipschitz convex function where $L$ is a constant.
\end{example}

\paragraph{Adversary setting.}Our setting allows the adversary to be non-oblivious, which means that it can choose functions $l_t$ according to the action history $a_1,a_2,\cdots,a_{t-1}$ of the learner. To formalize this setting, we can think that $l_t$ takes the whole action histories sequence $A_t=(a_1,a_2,\cdots,a_t)$ as its input. Under this viewpoint, we can assume that $l_t$ is determined before the game starts. And $l_t(A_t)\in \mathcal{L}$ means if we fix $a_1,\cdots,a_{t-1}$, it belongs to $\mathcal{L}$.

\paragraph{Delay setting.}In this paper, we consider the composite anonymous delayed feedback setting proposed by \cite{cesa2018nonstochastic}. In this setting, the adversary can split the loss function into $d$ components arbitrarily, where $d$ is a constant. That is $l_t(A_t) = \sum_{s=0}^{d-1}l_t^{(s)}(A_t)$. We also allow this splitting process to be non-oblivious, which means $l_t^{(s)}$ can be chosen according to the action histories $A_t$. At the end of $t$-th round, after the algorithm has made its decision this round, the algorithm can only observe $l_t^o(A_t)=\sum_{s=0}^{d-1}l_{t-s}^{(s)}(A_{t-s})$, but can not figure out how $l_t^o(A_t)$ is composed.

\paragraph{Pseudo-regret and policy regret.}In non-oblivious setting, the most common metric of the performance of a learner is \textit{external pseudo-regret}. Defined as 
\begin{equation}
    R_T^{pseudo}=\sum_{t=1}^T l_t(A_t)-\min_{y\in \mathcal{A}} \sum_{t=1}^T l_t(A_{t-1},y)
\end{equation}

Though \textit{external pseudo-regret} is widely used, its meaning is quite strange for the non-oblivious setting because if the learner actually chooses $y$ every round, the loss he gets is not $\sum_{t=1}^T l_t(A_{t-1},y)$ but $\sum_{t=1}^T l_t(y,y,\cdots,y)$. This fact inspires people to design more meaningful metrics in non-oblivious setting. \cite{arora2012online} proposed a new metric called \textit{policy regret} which is defined as
\begin{equation}
    R_T^{policy}=\sum_{t=1}^T l_t(A_t)-\min_{y\in \mathcal{A}} \sum_{t=1}^T l_t(y,\cdots,y)
\end{equation}
Policy regret captures the fact that learner's different action sequences will cause different loss sequences. We believe that policy regret is a more reasonable metric compared with external pseudo regret. Both pseudo-regret and policy regret are the same as standard regret definition when the loss is oblivious.

\cite{arora2012online} shows that policy regret has a $\Omega(T)$ lower bound against the non-oblivious adversary in $K$-armed bandits without delayed feedback. So they consider a weaker adversary which has bounded memory.

\begin{definition}[$m$-bounded memory]
    A non-oblivious loss sequence $l_t$ is called \emph{$m$-bounded memory} if for any action sequence $a_1, a_2, \cdots, a_t$ and $a_1',a_2',\cdots, a_{t-m-1}'$, the following holds
    \[l_t(a_1,a_2,\cdots,a_t)=l_t(a_1',a_2',\cdots,a_{t-m-1}',a_{t-m},\cdots,a_t)\] holds. We call a non-oblivious loss sequence is \emph{bounded memory} iff $m$ is a constant.
\end{definition}

From the definition, $0$-bounded memory loss sequence is an oblivious loss sequence. \cite{arora2012online} proved a $\tilde{\mathcal{O}}(K^{1/3}T^{2/3})$ policy regret upper bound for no delay setting under the assumption that the adversary is bounded memory. In this paper, we also assume that the loss sequence is bounded memory. However, we do not restrict the memory of delay adversary. As an example, the model in \cite{wang2021adaptive} can be seen as the same as our model with $0$-bounded memory loss sequence.

In our non-oblivious composite anonymous feedback setting, external pseudo-regret has a $\Omega(T)$ lower bound even when the loss sequence is generated by an $1$-bounded memory adversary, which means this setting is not learnable under the external pseudo-regret metric. Formally, we prove the following theorem.

\begin{theorem}
    \label{thm:thm1}
    In composite anonymous feedback setting, there is a $2$-armed bandit with non-oblivious adversary, such that any learner incurs $\Omega(T)$ expected pseudo-regret. Moreover, the loss sequence is $1$-bounded memory.
\end{theorem}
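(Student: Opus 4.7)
The plan is to exhibit a randomized non‑oblivious adversary on $K=2$ arms with delay $d=2$ whose composite‑anonymous feedback stream is deterministically constant, so that the learner cannot extract any information from its observations, yet the underlying loss sequence still punishes it linearly in expectation. First I would let the adversary draw a secret arm $b\in\{1,2\}$ uniformly at random at the start of the game (with a fixed convention $a_0:=1$ used only for initialization), and define the $1$-bounded memory loss
\[
l_t(a_{t-1},a_t)\;=\;\tfrac12 \;+\; \tfrac12\mathbf{1}[a_t=b]\;-\;\tfrac12\mathbf{1}[a_{t-1}=b]\;\in\;[0,1],
\]
which depends on the secret $b$ but only on the last two actions. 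The crux of the construction is the decomposition
\[
l_t^{(0)}(A_t)=\tfrac12-\tfrac12\mathbf{1}[a_{t-1}=b],\qquad l_t^{(1)}(A_t)=\tfrac12\mathbf{1}[a_t=b],
\]
whose two pieces are non‑negative and manifestly sum to $l_t(A_t)$; this is the main design choice, and the rest of the argument is essentially routine.

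The main step, and the one requiring care, is to verify that the observed feedback equals $1/2$ identically at every round $t\geq 2$:
\[
l_t^o\;=\;l_t^{(0)}(A_t)+l_{t-1}^{(1)}(A_{t-1})\;=\;\Bigl(\tfrac12-\tfrac12\mathbf{1}[a_{t-1}=b]\Bigr)+\tfrac12\mathbf{1}[a_{t-1}=b]\;=\;\tfrac12,
\]
so the delayed piece from round $t-1$ exactly cancels the $b$-dependent component consumed at round $t$ (the first round contributes only a harmless additive $O(1)$). Since the learner's entire observation sequence is a deterministic constant, each action $a_t$ is a measurable function of the learner's internal randomness alone, hence independent of $b$; averaging over $b$ then gives $\Pr[a_t=b]=1/2$ for every $t$.

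Finally I would telescope to compute the pseudo‑regret. The realized loss $\sum_{t=1}^{T}l_t(A_t)$ collapses to $T/2\pm\tfrac12$ because the interior indicators cancel pairwise. For any fixed comparator $y\in\{1,2\}$,
\[
\sum_{t=1}^{T}l_t(A_{t-1},y)\;=\;\tfrac{T}{2}+\tfrac{T}{2}\mathbf{1}[y=b]-\tfrac12\sum_{t=1}^{T}\mathbf{1}[a_{t-1}=b],
\]
whose minimum is attained at $y\neq b$ and equals $T/2-N/2$ with $N:=\sum_{t=1}^{T}\mathbf{1}[a_{t-1}=b]$. Hence $R_T^{pseudo}=N/2\pm O(1)$, and since $\mathbb{E}[N]=T/2$ by the independence argument, $\mathbb{E}[R_T^{pseudo}]\geq T/4-O(1)=\Omega(T)$. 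The essential insight—that a non‑oblivious delay split can erase the learner's signal even while the total loss stays in $[0,1]$—is precisely what distinguishes this setting from the oblivious‑delay regime studied in prior work.
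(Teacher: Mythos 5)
Your overall strategy --- choosing the delay split so that the carried-over piece from round $t-1$ exactly cancels the $b$-dependent part of round $t$'s immediate component, leaving a constant feedback stream --- is the right idea, and it is essentially the same mechanism as the paper's construction (there the uninformative stream is $0,1,0,1,\dots$ rather than $\tfrac12,\tfrac12,\dots$). However, there is a genuine gap at $t=1$ that breaks your key independence claim. With the convention $a_0:=1$ and no delayed contribution from a nonexistent round $0$, the first observation is
\[
l_1^o \;=\; l_1^{(0)}(A_1)\;=\;\tfrac12-\tfrac12\mathbf{1}[a_0=b]\;=\;\tfrac12-\tfrac12\mathbf{1}[b=1],
\]
which equals $0$ when $b=1$ and $\tfrac12$ when $b=2$: the very first feedback reveals $b$ exactly. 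Your parenthetical that round $1$ ``contributes only a harmless additive $O(1)$'' addresses its contribution to the \emph{loss}, not to the learner's \emph{information}, and the information is what matters here. The assertion that the observation sequence is a deterministic constant is therefore false, and so is the conclusion that $a_t$ is independent of $b$.

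This is not a cosmetic issue: by your own telescoping computation, $R_T^{pseudo}=\tfrac12 N\pm O(1)$ with $N=\sum_{t=0}^{T-1}\mathbf{1}[a_t=b]$, so a learner minimizes its pseudo-regret by \emph{avoiding} arm $b$. The learner that reads $l_1^o$, infers $b$, and plays the other arm for all $t\ge 2$ achieves $N\le 2$ and hence $O(1)$ expected pseudo-regret against your adversary distribution, so your construction does not witness the theorem. Fortunately the repair is local: define round $1$ without the $a_0$ term, e.g.\ $l_1(a_1)=\tfrac12+\tfrac12\mathbf{1}[a_1=b]$ with split $l_1^{(0)}=\tfrac12$ and $l_1^{(1)}=\tfrac12\mathbf{1}[a_1=b]$. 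Then $l_1^o=\tfrac12$ deterministically, every subsequent $l_t^o=\tfrac12$ as you computed, the split components remain valid (nonnegative, summing to $l_t$), and the independence and telescoping arguments go through, yielding $\mathbb{E}[R_T^{pseudo}]\ge T/4-O(1)$. With that fix your proof is correct and is a clean variant of the paper's argument.
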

\begin{proof}
    Let action set $\mathcal{A}=\{1,2\}$. By Yao's minmax principle, it is enough to construct a distribution over adversaries such that any deterministic learner can only achieve $\Omega(T)$ expected pseudo-regret. To do this, we will construct an adversary so that no learner can gain any information from its observation. The delay adversary is simple, at odd rounds, it delay all loss to the next round, at even rounds, it does not delay the loss. To be formally, $l_t^{(0)}(A_t)=0 ,l_t^{(1)}(A_t)=l_t(A_t)$ when $t$ is odd, $l_t^{(0)}(A_t)=l_t(A_t) ,l_t^{(1)}(A_t)=0$ when $t$ is even.

    The loss $l_t(A_t)$ is related to a random variable $Z$, where $Z$ is picked up from $\mathcal{A}$ uniformly at random. At the odd round, the adversary set the loss of arm $Z$ to be $0$ and the loss of another arm to be $1$. At the even round, if the learner chose arm $Z$ at the previous round, the adversary set losses of both arms to be $1$. If the learner did not choose arm $Z$ in the previous round, the adversary set losses of both arms to be $0$.

    Obviously, the observation sequence of any deterministic learner is $0,1,0,1,0,1,\cdots$ regardless of their actions per round, so it can not distinguish between different $Z$. Let $N_i^{odd}$ be the times the learner chose arm $i$ at odd rounds. Then $\mathbb{E}[N_{i}^{odd}|Z=1]=\mathbb{E}[N_{i}^{odd}|Z=2]$ since the learner can not distinguish $Z$. This leads to
    \begin{align*}
            \mathbb{E}[R_T^{pseudo}]&\geq \frac{1}{2} \mathbb{E}[R_T^{pseudo}|Z=1]+ \frac{1}{2} \mathbb{E}[R_T^{pseudo}|Z=2]\\
            &\geq \frac{1}{2}\mathbb{E}[N_{2}^{odd}|Z=1]+\frac{1}{2}\mathbb{E}[N_{1}^{odd}|Z=2]\\
            &=\frac{1}{2}\mathbb{E}[N_{1}^{odd}+N_{2}^{odd}|Z=1]=\frac{T}{4}
    \end{align*}
\end{proof}

As we can see from the next chapter, though our setting is not learnable under the pseudo-regret metric, it is learnable under the policy regret metric with the assumption that loss sequence is bounded memory.

\section{Upper Bound}
In this section, we prove that by applying a mini-batch wrapper, one can convert any standard non-oblivious bandit algorithm to an algorithm that can handle non-oblivious adversary with composite anonymous delayed feedback.

Intuitively, when the learner chooses an action different from the last round, the feedback it observes in the interval of $d$ rounds after that can not reflect the true losses of the actions it chooses. In other words, the learner suffers from observing inaccurate feedback during a $d$ rounds interval immediately when it switches its chosen action. To obtain accurate feedback for decisions, a learner can not switch its chosen action frequently. A natural approach is to apply a mini-batch wrapper on a standard bandit algorithm. That is, we divide all $T$ rounds into $\lceil T/\tau \rceil$ batches where $\tau$ is the batch size. Each batch contains $\tau $ consecutive rounds except the last round, which may contain fewer than $\tau $ rounds. At the beginning of the $j$-th batch, we receive an action $z_j\in \mathcal{A}$ from the black-box algorithm and keep choosing $z_j$ during this batch. At the end of the $j$-th batch, we feed the average loss observed in this batch to the black-box algorithm. Since the black-box algorithm can only receive a $[0,1]$ loss, we feed the minimal between average loss and $1$. See \algo{alg} for the pseudo-code.

\begin{algorithm}[tb]
	\caption{Mini-batch wrapper} 
	\label{algo:alg}
	\begin{algorithmic}[1]
		\Require Black-box bandit algorithm $\mathcal{B}$, batch size $\tau$, time horizon $T$
		\State $j\gets 1$
		\While{not end}
		    \If{there are less than $\tau$ rounds}
		        \State choose arbitrary action in remaining rounds.
		    \Else
		        \State query $\mathcal{B}$ for the next action $z_j\in\mathcal{A}$
		        \State choose $z_j$ for consecutive $\tau$ rounds, collect feedback $l_t^o$ for $t\in [(j-1)\tau+1,j\tau]$
		        \State feed $\mathcal{B}$ with $\hat{l}_j=\min \left\{\frac{1}{\tau}\sum_{t=(j-1)\tau+1}^{j\tau}l_t^o,1\right\}$ as the feedback of action $z_j$
		        \State $j\gets j+1$
		    \EndIf
		\EndWhile
	\end{algorithmic} 
\end{algorithm}

By applying the mini-batch wrapper, times of the learner's action switching can be controlled by $\mathcal{O}(T/\tau)$. In each batch, the learner suffers the inaccurate feedback for constant rounds so that it only curses a constant feedback error, which will add to the regret finally. If we set the batch size to be a relatively large quantity such that $T/\tau =o(T)$, it is possible that we control the regret to $o(T)$. 
\begin{theorem}
    \label{thm:thm2}
    Suppose we have a bandit algorithm $\mathcal{B}$ for no delay setting which achieves $R(J)$ expected pseudo regret when the time horizon is $J$ and the loss sequence is generated by a non-oblivious adversary. Assume $\tau>\max\{d,m\}$ and the loss sequence is $m$-bounded, then the mini-batch wrapper (\algo{alg}) can achieve policy regret as follows for the composite anonymous feedback setting. 
    \[\mathbb{E}[R_{T}^{policy}]\leq\tau R(\lfloor T/\tau\rfloor)+\mathcal{O}(\max\{m,d\} T/\tau)+\mathcal{O}(\tau)\]
\end{theorem}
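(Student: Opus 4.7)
}
The plan is to view each batch as a single ``super-round'' for the black-box $\mathcal{B}$, and relate the true policy regret in $T$ rounds to the pseudo regret of $\mathcal{B}$ over $J=\lfloor T/\tau\rfloor$ super-rounds. First I introduce the \emph{clean per-batch loss}
\[
\tilde l_j(z) \;:=\; \frac{1}{\tau}\sum_{t=(j-1)\tau+1}^{j\tau} l_t(z,z,\dots,z),
\]
which is the average per-round loss one would incur in batch $j$ if $z$ had been played forever. Using the $m$-bounded memory assumption, $l_t(A_t) = l_t(\bar z_j)$ whenever $t-(j-1)\tau>m$, so within each batch at most $m$ rounds disagree with the clean version; summing over the $J$ batches and absorbing the last incomplete batch gives
\[
\sum_{t=1}^T l_t(A_t) \;=\; \tau\sum_{j=1}^{J}\tilde l_j(z_j) + \mathcal{O}(mT/\tau)+\mathcal{O}(\tau),
\qquad
\sum_{t=1}^T l_t(\bar y) \;=\; \tau\sum_{j=1}^{J}\tilde l_j(y) + \mathcal{O}(\tau).
\]
Subtracting and minimizing over $y$ reduces the policy regret to $\tau\bigl(\sum_j\tilde l_j(z_j)-\min_y\sum_j\tilde l_j(y)\bigr) + \mathcal{O}(mT/\tau)+\mathcal{O}(\tau)$.

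Next I set up a virtual non-oblivious bandit game for $\mathcal{B}$. For each $j$ and each $y\in\mathcal{A}$, let $l'_j(y)$ be the hypothetical value of $\hat l_j$ that the wrapper \emph{would} have fed back if $\mathcal{B}$ had played $y$ in super-round $j$ while all earlier choices $z_1,\dots,z_{j-1}$ were unchanged; this is well-defined because the underlying $l_t$ and delay splittings are fixed in advance as functions of the action history. Then $l'_j$ depends only on $z_1,\dots,z_{j-1}$, so it is a legitimate non-oblivious loss sequence for $\mathcal{B}$, and the feedback $\mathcal{B}$ actually receives is exactly $l'_j(z_j)$. Hence by assumption
\[
\mathbb{E}\!\left[\sum_{j=1}^{J} l'_j(z_j) - \min_{y\in\mathcal{A}}\sum_{j=1}^{J} l'_j(y)\right] \;\le\; R(J).
\]

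The key lemma to fill in is $|l'_j(y)-\tilde l_j(y)| = \mathcal{O}((m+d)/\tau)$ uniformly in $j,y$. I would prove it by rewriting $\sum_{t\in\text{batch }j} l_t^o(A'_t)$ as $\sum_{u,s} \mathbb{1}[u+s\in\text{batch }j]\, l_u^{(s)}(A'_u)$ and noting that the components with $u$ inside batch $j$ reassemble exactly into $\sum_{u\in\text{batch }j} l_u(A'_u)$ except for boundary terms: at most $d-1$ components spill in from batch $j-1$ and at most $d-1$ spill out to batch $j+1$, each contributing $\le 1$, so the total spillover is $\mathcal{O}(d)$. Combining with the $m$-memory argument (the first $m$ rounds of batch $j$ may still ``remember'' actions from batch $j-1$, costing $\mathcal{O}(m)$) and the truncation $\min\{1,\cdot\}$ (which only shrinks the error since $\tilde l_j(y)\in[0,1]$) yields the bound. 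Substituting this approximation into the regret guarantee of $\mathcal{B}$ costs an additional $\mathcal{O}(J(m+d)/\tau)=\mathcal{O}((m+d)T/\tau^2)$, which after multiplying by $\tau$ becomes $\mathcal{O}(\max\{m,d\}T/\tau)$ and matches the stated bound.

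The main obstacle is the spillover accounting for $l'_j(y)$: defining the counterfactual observations so that $l'_j$ is \emph{truly} a function of $y$ alone (with the prior $z_{<j}$ frozen), and bookkeeping exactly which delay components cross each batch boundary, requires care so that the errors from bounded memory and delay combine additively rather than compounding. The other ingredients (the memory-based clean-loss decomposition and plugging into $\mathcal{B}$'s pseudo-regret guarantee) are routine.
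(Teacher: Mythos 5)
Your proposal is correct and follows essentially the same route as the paper's proof: a mini-batch reduction where the black box is fed a legitimate non-oblivious loss sequence, with per-batch error terms of $\mathcal{O}(d)$ for delay components spilling across batch boundaries, $\mathcal{O}(m)$ for the first $m$ rounds of each batch remembering the previous batch, and a Lipschitz argument showing the $\min\{\cdot,1\}$ truncation is benign. The paper organizes this as a four-term telescoping decomposition of the regret rather than via your intermediate ``clean loss'' $\tilde l_j$ and approximation lemma, but the underlying estimates are identical.
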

\begin{proof}
    Without loss of generality, we can assume $T/\tau$ is an integer, otherwise it only produces an extra $\mathcal{O}(\tau)$ term in the expected policy regret bound. Since $z_{j}$ is the action of $j$-th batch, we have $a_{(j-1)\tau+1}=a_{(j-1)\tau+2}=\cdots=a_{j\tau}=z_{j}$. Due to the pseudo regret bound of the black-box algorithm, we have
    \begin{align*}
        \mathbb{E}\left[\max_{y\in\mathcal{A}}\sum_{j=1}^{T/\tau}(\hat{l}_{j}(Z_j)-\hat{l}_{j}(Z_{j-1},y))\right]\leq R(T/\tau )
    \end{align*}
    Denote $y^t$ be the $t$-repetition sequence $(y,y,\cdots,y)$. To bound $R_T^{policy}(y)=\sum_{t=1}^Tl_t(A_t)-\sum_{t=1}^{T}l_t(y^t)$ , we decompose it into following terms
    \begin{align}
            \begin{aligned}\label{eq:main}
            \sum_{t=1}^T l_t(A_t)-\sum_{t=1}^{T}l_t(y^t)=&\left(\sum_{t=1}^{T}l_t(A_t)-\sum_{t=1}^{T}l_t^o(A_t)\right)+\sum_{j=1}^{T/\tau}\left(\sum_{t=(j-1)\tau +1}^{j\tau}l_t^o(A_t)-\tau\hat{l}_j(Z_j) \right)\\&+\tau \sum_{j=1}^{T/\tau}\left(\hat{l}_{j}(Z_j)-\hat{l}_{j}(Z_{j-1},y)\right)+\left(\tau\sum_{j=1}^{T/\tau}\hat{l}_t(Z_{j-1},y)-\sum_{t=1}^Tl_t(y^t)\right)
            \end{aligned}
    \end{align}
    We bound above formula term by term. To make the proof more concise, we let $l_t(A_t)=0 \ ,\forall t\leq 0$. And we admit the convention that $\sum_{i=a}^b(\mbox{whatever})=0 $ if $a>b$. For $\sum_{t=1}^Tl_t(A_t)-\sum_{t=1}^{T}l_t^o(A_t)$, it is the difference between true loss and observed feedback. Since only the loss of the last $d-1$ rounds may not be fully observed, this difference could be bounded by $d-1$ as follows
    \begin{align*}
            \sum_{t=1}^Tl_t(A_t)-\sum_{t=1}^{T}l_t^o(A_t)&=\sum_{t=1}^Tl_t(A_t)-\sum_{t=1}^{T}\sum_{s=0}^{d-1}l_{t-s}^{(s)}(A_{t-s})\\&=\sum_{t=1}^T\sum_{s=0}^{d-1}l_t^{(s)}(A_t)-\sum_{t=1}^{T}\sum_{s=0}^{\min\{d-1,T-t\}}l_{t}^{(s)}(A_{t})\\&=\sum_{t=T-d+2}^T \sum_{s=T-t+1}^{d-1}l_t^{(s)}(A_t)\\&\leq d-1
    \end{align*}
    The last equality is because 
    \begin{align*}
        \sum_{s=T-t+1}^{d-1}l_t^{(s)}(A_t)\leq \sum_{s=0}^{d-1}l_t^{(s)}(A_t)=l_t(A_t)\leq 1.
    \end{align*}

    For the second term,
    \begin{align*}
        \sum_{t=(j-1)\tau +1}^{j\tau}l_t^o(A_t) &= \sum_{t=(j-1)\tau +1}^{j\tau}\sum_{s=0}^{d-1}l_{t-s}^{(s)}(A_{t-s})\\&= \sum_{t=(j-1)\tau -d+2}^{j\tau}\sum_{s=\max \{0,(j-1)\tau +1-t\}}^{\min \{d-1,j\tau-t\}}l_t^{(s)}(A_t)\\&\leq \tau+d-1
    \end{align*}
    According to the definition of $\hat{l}_t (A_t)$, 
    \begin{align*}
        \sum_{t=(j-1)\tau +1}^{j\tau}l_t^o(A_t)-\tau\hat{l}_j(Z_j)= \max \left\{0,\sum_{t=(j-1)\tau +1}^{j\tau}l_t^o(A_t) -\tau \right\}\leq d-1
    \end{align*}
        And we will bound the third term by the expected regret bound of the black box algorithm. For the last term, we further decompose it into
        \begin{align}
        \begin{aligned}
        \label{eq:split2}
                \tau\sum_{j=1}^{T/\tau}\hat{l}_t(Z_{j-1},y)-\sum_{t=1}^Tl_t(y^t)=&\left(\tau\sum_{j=1}^{T/\tau}\hat{l}_t(Z_{j-1},y)-\sum_{j=1}^{T/\tau }\sum_{t=(j-1)\tau+1}^{j\tau}l_t^o(A_{(j-1)\tau},y^{t-(j-1)\tau}) \right)\\ &+\left(\sum_{j=1}^{T/\tau }\sum_{t=(j-1)\tau+1}^{j\tau}l_t^o(A_{(j-1)\tau},y^{t-(j-1)\tau})-\sum_{j=1}^{T/\tau }\sum_{t=(j-1)\tau+1}^{j\tau}l_t(A_{(j-1)\tau},y^{t-(j-1)\tau}) \right)\\&+\left(\sum_{j=1}^{T/\tau }\sum_{t=(j-1)\tau+1}^{j\tau}l_t(A_{(j-1)\tau},y^{t-(j-1)\tau})-\sum_{t=1}^T l_t(y^t)\right)
        \end{aligned}
        \end{align}
        Note that by the definition of $\hat{l}_t (A_t)$
        \[\tau \hat{l}_j(Z_{j-1},y)=\min \left\{\sum_{j=1}^{T/\tau }\sum_{t=(j-1)\tau+1}^{j\tau}l_t^o(A_{(j-1)\tau},y^{t-(j-1)\tau}),\tau\right\}\]
        Thus, the first term of \eq{split2} is non-positive. And we define $j_t\triangleq\lceil t/\tau \rceil$, which means the batch of the $t$-th round. Then
        \begin{align*}
                    \sum_{j=1}^{T/\tau}\sum_{t=(j-1)\tau+1}^{j\tau}l_t^o(A_{(j-1)\tau},y^{t-(j-1)\tau})&=\sum_{j=1}^{T/\tau}\sum_{t=(j-1)\tau+1}^{j\tau}\left(\sum_{s=0}^{t-(j-1)\tau-1}l_{t-s}^{(s)}(A_{(j-1)\tau},y^{t-s-(j-1)\tau})\right.\\&\quad \left.+\sum_{s=t-(j-1)\tau}^{d-1}l_{t-s}^{(s)}(A_{t-s})\right)\\
                    &=\sum_{t=1}^{T}\left(\sum_{s=0}^{t-(j_{t}-1)\tau-1} l_{t-s}^{(s)}(A_{(j_t-1)\tau},y^{t-s-(j_t-1)\tau})+\sum_{s=t-(j_{t}-1)\tau}^{d-1}l_{t-s}^{(s)}(A_{t-s})\right)\\
                    &\leq \sum_{t=1}^T\left(\sum_{s=0}^{d-1}\left(\mathbb{I}\left[t>(j_{t+s}-1)\tau+1\right]\cdot l_{t}^{(s)}(A_{(j_{t}-1)\tau},y^{t-(j_{t}-1)\tau})\right.\right.\\&\quad\left.\left.+\mathbb{I}[t\leq(j_{t+s}-1)\tau]\cdot l_{t}^{(s)}(A_{t})\right)\right)
        \end{align*}
        Thus, the second term of \eq{split2} is bounded by the following inequality:
        \begin{align*}
                    \mbox{second term of \eq{split2}}&\leq \sum_{t=1}^{T}\sum_{s=0}^{d-1}\mathbb{I}[t\leq(j_{t+s}-1)\tau]\cdot\left(l_{t}^{(s)}(A_{t})-l_{t}^{(s)}(A_{(j_{t}-1)\tau},y^{t-(j_{t}-1)\tau})\right)\\&\leq \sum_{t=1}^{T}\mathbb{I}[t\leq(j_{t+d-1}-1)\tau]\\&\leq \frac{T}{\tau}(d-1)
        \end{align*}
The second inequality is because if $t>(j_{t+d-1}-1)\tau$, then 
\begin{align*}
\sum_{s=0}^{d-1}\mathbb{I}[t\leq(j_{t+s}-1)\tau]\left(l_{t}^{(s)}(A_{t})-l_{t}^{(s)}(A_{(j_{t}-1)\tau},y^{t-(j_{t}-1)\tau})\right)=0
\end{align*}
Else, since 
\begin{align*}
    \sum_{s=0}^{d-1}\mathbb{I}[t\leq(j_{t+s}-1)\tau]\cdot l_{t}^{(s)}(A_{t})\leq l_t(A_t)\leq 1
\end{align*} and 
\begin{align*}
    \sum_{s=0}^{d-1}\mathbb{I}[t\leq(j_{t+s}-1)\tau]\cdot l_{t}^{(s)}(A_{(j_{t}-1)\tau},y^{t-(j_{t}-1)\tau})\leq l_{t}(A_{(j_{t}-1)\tau},y^{t-(j_{t}-1)\tau})\leq 1
\end{align*}
we can bound 
\begin{align*}
\sum_{s=0}^{d-1}\mathbb{I}[t\leq(j_{t+s}-1)\tau]\left(l_{t}^{(s)}(A_{t})-l_{t}^{(s)}(A_{(j_{t}-1)\tau},y^{t-(j_{t}-1)\tau})\right)\leq  1
\end{align*}
Now we bound the third term of \eq{split2}:
     \begin{align*}
                    \mbox{third term of \eq{split2}}&= \sum_{j=1}^{T/\tau }\sum_{t=(j-1)\tau+1}^{(j-1)\tau+m}\left(l_t(A_{(j-1)\tau},y^{t-(j-1)\tau})-l_t(y^{t})\right)\\&\quad+\sum_{j=1}^{T/\tau}\sum_{t=(j-1)\tau+m+1}^{j\tau}\left(l_t(A_{(j-1)\tau},y^{t-(j-1)\tau})-l_t(y^{t})\right)\\&=\sum_{j=1}^{T/\tau }\sum_{t=(j-1)\tau+1}^{(j-1)\tau+m}\left(l_t(A_{(j-1)\tau},y^{t-(j-1)\tau})-l_t(y^{t})\right)\\& \leq  \sum_{j=1}^{T/\tau}\sum_{t=(j-1)\tau+1}^{(j-1)\tau+m} 1=\frac{mT}{\tau} 
        \end{align*}

    When we plug all above bounds into \eq{main}, $\forall y\in \mathcal{A}$, the following holds
    \begin{align*}
        \sum_{t=1}^Tl_t(A_t)-\sum_{t=1}^{T}l_t(y^t)\leq (d-1) + (d-1)\frac{T}{\tau} +\tau \sum_{j=1}^{T/\tau}\left(\hat{l}_{j}(Z_j)-\hat{l}_{j}(Z_{j-1},y)\right) + (d-1)\frac{T}{\tau}+\frac{mT}{\tau}
    \end{align*}
    Maximize over $y\in \mathcal{A}$ and take expectation both sides, we have
    \begin{align*}
        \mathbb{E}\left[\max_{y\in \mathcal{A}}\left(\sum_{t=1}^Tl_t(A_t)-\sum_{t=1}^{T}l_t(y^t)\right)\right]\leq \
        \tau R(T/\tau)+(2d+m-2)\frac{T}{\tau}+d-1
    \end{align*}
\end{proof}
By applying the mini-batch wrapper on some bandit algorithms, we obtain algorithms that can handle composite anonymous delayed feedback setting. Firstly, we apply \thm{thm2} on $K$-armed bandit problem. \cite{auer2002nonstochastic} proposed a well known algorithm $EXP3$ which guarantees $\tilde{\mathcal{O}}(\sqrt{K T})$ expected pseudo regret. Employing this algorithm, we have the following corollary.

\begin{corollary} 
  For $K$-armed bandit problem, if we apply \algo{alg} on $EXP3$ algorithm, and set batch size $\tau = K^{-1/3}T^{1/3}$, we have the expected policy regret satisfies
  \[\mathbb{E}[R_T^{policy}]\leq \tilde{\mathcal{O}}(\max\{m,d\} K^{1/3}T^{2/3})\]

\end{corollary}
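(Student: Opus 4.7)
The plan is simply to instantiate \thm{thm2} with the black-box algorithm $\mathcal{B}$ taken to be EXP3. Since EXP3 achieves $R(J)=\tilde{\mathcal{O}}(\sqrt{KJ})$ expected pseudo regret on the non-oblivious $K$-armed bandit over $J$ rounds (this bound holds against any non-oblivious adversary because the standard EXP3 analysis compares only to $\mathbb{E}[\hat{l}_j(Z_{j-1},y)]$, which is the pseudo regret notion appearing inside \thm{thm2}), the conclusion of \thm{thm2} becomes
\begin{align*}
\mathbb{E}[R_T^{policy}] \leq \tilde{\mathcal{O}}\bigl(\tau\sqrt{KT/\tau}\bigr) + \mathcal{O}\bigl(\max\{m,d\}\, T/\tau\bigr) + \mathcal{O}(\tau) = \tilde{\mathcal{O}}\bigl(\sqrt{KT\tau}\bigr) + \mathcal{O}\bigl(\max\{m,d\}\, T/\tau\bigr) + \mathcal{O}(\tau).
\end{align*}

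Next I would choose $\tau$ to balance the first two (dominant) terms. Treating $m,d$ as constants, setting $\sqrt{KT\tau}\asymp T/\tau$ yields $\tau^{3}\asymp T/K$, i.e.\ $\tau = K^{-1/3}T^{1/3}$, exactly the choice stated in the corollary. Plugging this value in, the first term becomes $\sqrt{K\cdot T\cdot K^{-1/3}T^{1/3}}=K^{1/3}T^{2/3}$, the second becomes $\max\{m,d\}\cdot K^{1/3}T^{2/3}$, and the $\mathcal{O}(\tau)$ remainder is lower order. Verifying that $\tau>\max\{m,d\}$ (so that \thm{thm2} applies) holds for $T$ sufficiently large relative to $K$ and $\max\{m,d\}$, which is the regime of interest.

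Combining the three contributions and absorbing logarithmic factors from EXP3 into $\tilde{\mathcal{O}}(\cdot)$ gives the claimed bound $\mathbb{E}[R_T^{policy}]\leq \tilde{\mathcal{O}}(\max\{m,d\}\,K^{1/3}T^{2/3})$. No step in this argument is a real obstacle: the entire work has already been done inside the proof of \thm{thm2}, and all that remains here is the substitution of the EXP3 pseudo regret bound followed by the standard AM-GM style balancing of $\tau$ against $T/\tau$. The only mild subtlety worth mentioning explicitly is the verification that EXP3's guarantee is stated in the same pseudo regret form required by \thm{thm2} (comparison against $\hat{l}_j(Z_{j-1},y)$ rather than the policy regret counterpart), which is immediate because the wrapper feeds only scalar losses $\hat{l}_j\in[0,1]$ to the black box and EXP3 treats them as an arbitrary non-oblivious loss sequence.
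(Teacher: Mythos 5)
Your proof is correct and follows exactly the route the paper intends: instantiate \thm{thm2} with EXP3's $\tilde{\mathcal{O}}(\sqrt{KJ})$ non-oblivious pseudo-regret bound and balance $\sqrt{KT\tau}$ against $\max\{m,d\}T/\tau$ to get $\tau=K^{-1/3}T^{1/3}$. The paper omits the calculation entirely, and your explicit check that EXP3's guarantee is of the pseudo-regret form required by \thm{thm2} and that $\tau>\max\{m,d\}$ for large $T$ is a welcome addition rather than a deviation.
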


For bandit convex optimization, \cite{kernel17} proposed an algorithm using kernel method, and it can guarantee $\tilde{\mathcal{O}}(K^{9.5}\sqrt{T})$ expected pseudo regret, where $K$ is the dimension of the action space $\mathcal{A}$. By employing this algorithm as our black-box algorithm $\mathcal{B}$, we have the following corollary.
\begin{corollary}
    For bandit convex optimization, if we apply \algo{alg} on the algorithm in \cite{kernel17}, and set batch size $\tau = K^{-19/3}T^{1/3}$, we have the expected policy regret satisfies
  \[\mathbb{E}[R_T^{policy}]\leq \tilde{\mathcal{O}}(\max\{m,d\} K^{19/3}T^{2/3})\]

\end{corollary}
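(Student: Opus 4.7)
The plan is to invoke \thm{thm2} directly with the kernel-based bandit convex optimization algorithm of \cite{kernel17} as the black-box $\mathcal{B}$; that algorithm achieves expected pseudo regret $R(J) = \tilde{\mathcal{O}}(K^{9.5}\sqrt{J})$ against non-oblivious adaptive losses on a horizon of $J$ rounds. Substituting this $R$ into the bound of \thm{thm2}, I would get
\[\mathbb{E}[R_T^{policy}] \leq \tilde{\mathcal{O}}\bigl(\tau K^{9.5}\sqrt{T/\tau}\bigr) + \mathcal{O}\bigl(\max\{m,d\}\,T/\tau\bigr) + \mathcal{O}(\tau),\]
so the whole argument reduces to choosing $\tau$ to balance the first two terms.

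The first term simplifies to $K^{9.5}\sqrt{\tau T}$, which is increasing in $\tau$, while the second is decreasing in $\tau$. Setting $K^{9.5}\sqrt{\tau T} \asymp T/\tau$ (and absorbing $\max\{m,d\}$ into the $\tilde{\mathcal{O}}$) gives $\tau^{3/2} \asymp T^{1/2}/K^{9.5}$, which yields the stated choice $\tau = K^{-19/3}T^{1/3}$. A short exponent check ($9.5 - 19/6 = 19/3$) then shows that the first term becomes $\tilde{\mathcal{O}}(K^{19/3}T^{2/3})$ and the second becomes $\max\{m,d\}\,K^{19/3}T^{2/3}$, while the residual $\mathcal{O}(\tau) = \mathcal{O}(K^{-19/3}T^{1/3})$ is strictly dominated.

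There is essentially no obstacle here — the proof is a single invocation of \thm{thm2} followed by a balancing calculation. The only points worth double-checking are that the pseudo regret guarantee of \cite{kernel17} is indeed proved against adaptive (non-oblivious) adversaries, so that it qualifies as the $R(J)$ required by \thm{thm2}, and that the hypothesis $\tau > \max\{d,m\}$ eventually holds, which is automatic for large $T$ since $\tau = \Theta(T^{1/3})$.
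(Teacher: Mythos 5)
Your proposal is correct and matches the paper's (implicit) argument exactly: the corollary is obtained by plugging $R(J)=\tilde{\mathcal{O}}(K^{9.5}\sqrt{J})$ from \cite{kernel17} into \thm{thm2} and balancing $K^{9.5}\sqrt{\tau T}$ against $\max\{m,d\}T/\tau$, and your exponent arithmetic ($9.5-19/6=19/3$) checks out. The two caveats you flag (the adaptive-adversary guarantee of the black box, and $\tau>\max\{d,m\}$ for large $T$) are exactly the hypotheses the paper also takes for granted.
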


Other algorithms for composite anonymous feedback setting are also some kinds of mini-batch wrapper, such as CLW in \cite{cesa2018nonstochastic} and ARS-EXP3 in \cite{wang2021adaptive}. ARS-EXP3 uses increasing batch sizes on EXP3 algorithm. CLW is a wrapper algorithm that can be applied on many normal bandit algorithms, and it uses a constant batch size. As we discussed above, every action switching may incur extra constant regret. At first, it seems that CLW will incur $\Omega(T)$ regret since CLW performs $\Omega(T)$ action switches. However, thanks to the oblivious adversary they assumes, CLW can randomize the batch size to fool the oblivious adversary, such that they can still reach $\mathcal{O}(\sqrt{T})$ regret. Nonetheless, this randomizing technique does not work when delay is non-oblivious since the adversary here can read learner's random bits realized before the current round.

\section{Lower Bound}
In this section, we prove a matching lower bound for $K$-armed bandit. The main result of \cite{dekel2014bandits} actually implies a $\tilde{\Omega}(T^{2/3})$ policy regret lower bound for $K$-armed bandit without delayed feedback. However, their lower bound depends on constructing a non-oblivious loss sequence. Our lower bound here is stronger since the loss sequence in our construction is oblivious. The lower bound shows that the non-oblivious delay adversary on its own is enough to incur $\tilde{\Theta}(T^{2/3})$ regret without the help of non-oblivious loss.

\begin{theorem}
\label{thm:thm3}
For $K$-arms bandit with non-oblivious delays and oblivious loss sequence, we have
\[\mathbb{E}[R_T] =\tilde{\Omega}(K^{1/3}T^{2/3})\]
\end{theorem}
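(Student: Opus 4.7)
The plan is to adapt the multi-scale random walk construction of Dekel, Ding, Koren, and Peres, which establishes a $\tilde{\Omega}(K^{1/3}T^{2/3})$ regret lower bound for bandits with unit switching costs and an oblivious loss sequence. The key insight is that a non-oblivious delay adversary, already with $d=2$, can turn each action switch into an information loss, mimicking a unit switching cost while keeping the loss sequence oblivious. By Yao's minimax principle, it suffices to exhibit a distribution over (oblivious loss sequence, non-oblivious delay) pairs such that every deterministic learner has expected regret $\tilde{\Omega}(K^{1/3}T^{2/3})$.

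For the loss distribution I would pick a best arm $\chi$ uniformly from $[K]$ and set $l_t(a) = \operatorname{clip}(\tfrac12 + \xi_t - \epsilon \cdot \mathbb{I}[a=\chi], 0, 1)$, where $\xi_t$ is the multi-scale random walk of Dekel et al.\ and $\epsilon = \tilde{\Theta}(K^{1/3}T^{-1/3})$. This sequence is oblivious by construction. For the delay, take $d=2$ with the non-oblivious split: when $a_t = a_{t-1}$, set $l_t^{(0)}(A_t) = l_t(a_t)$ and $l_t^{(1)}(A_t) = 0$ (deliver immediately); when $a_t \neq a_{t-1}$, set $l_t^{(0)}(A_t) = 0$ and $l_t^{(1)}(A_t) = l_t(a_t)$ (fully delay). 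Under this split, the observation $l_t^o = l_t^{(0)}(A_t) + l_{t-1}^{(1)}(A_{t-1})$ reveals $l_t(a_t)$ only when both rounds $t$ and $t-1$ are non-switch; a switch at either endpoint scrambles the observation with a loss sampled under a different arm.

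With this construction, the argument follows the two-regime structure of Dekel et al. Either the learner performs few switches, in which case it commits to a small subset of arms for long stretches and a chi-squared divergence bound between the observation law under different values of $\chi$ is too small to identify $\chi$; or it performs many switches, but each switch creates a round whose observation is essentially decoupled from the current action, so the total information about $\chi$ is again bounded. Tuning $\epsilon = \tilde{\Theta}(K^{1/3}T^{-1/3})$ converts this information-theoretic budget into expected regret $\Omega(\epsilon T) = \tilde{\Omega}(K^{1/3}T^{2/3})$.

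The main obstacle is the divergence computation. Because the split at round $t$ is a function of the action history $A_t$, the observation $l_t^o$ is no longer a clean function of $l_t(a_t)$ alone, and the cross-round dependency it introduces is not present in the original Dekel et al.\ analysis. The technical step is to show that this mixing can only reduce the information available about $\chi$: each contiguous run of non-switch rounds contributes information as in a standard bandit with oblivious one-round delay, while every switch effectively deletes one sample. Formalizing this requires decomposing the chi-squared divergence along the learner's switch pattern and carefully handling post-switch observations, whose content is the previous-round loss of the previous arm rather than a fresh draw of the current arm's loss.
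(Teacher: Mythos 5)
Your high-level framing (Yao's principle, a multi-scale random walk with an $\epsilon$-better hidden arm, a $d=2$ non-oblivious delay) matches the paper, but the specific delay adversary you propose does not work, and the two-regime argument you want to import from Dekel et al.\ does not transfer. Concretely, consider a learner that switches every round (e.g.\ alternates between two arms, or cycles through all $K$). Then every round has $a_t\neq a_{t-1}$, so your split gives $l_t^{(0)}=0$ and $l_t^{(1)}=l_t(a_t)$, and the observation is $l_t^o=l_{t-1}(a_{t-1})$: the learner receives the \emph{exact} loss of the arm it played one round earlier. This is an ordinary non-anonymous bandit with unit delay; against your loss distribution an alternating learner recovers the walk increments $\pm\epsilon$ from consecutive differences, identifies $\chi$ after $O(\sigma^2/\epsilon^2)$ rounds, and achieves total regret $\tilde{O}(T^{1/3})$ --- far below the bound you are trying to prove. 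Your intermediate claim also fails in the other direction: after a switch at round $t$, the observation at round $t+1$ (if $a_{t+1}=a_t$) is $l_{t+1}(a_t)+l_t(a_t)$, the sum of two losses of the \emph{same} arm with a doubled $\epsilon$-bias, so a switch does not ``delete a sample.'' The deeper issue is that the Dekel et al.\ dichotomy needs the ``many switches'' branch to be closed by an explicit switching cost charged to the regret; in your model frequent switching costs the learner nothing, so that branch is open.

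The paper's adversary is built on a different principle: rather than penalizing switches, it makes the round-$t$ observation \emph{independent of $a_t$}. It keeps an internal high/low state and sets $l_t^{(0)}(a)=\mathrm{trunc}_{[1/2,1]}(W_t+3/4-\epsilon\cdot\mathbb{I}[\text{low}])-l_{t-1}^{(1)}(a_{t-1})$ for every arm $a$, carrying the residual $l_t^{(1)}(a)=l_t(a)-l_t^{(0)}(a)$ forward as a debt. The debt changes by at most $\epsilon$ per round and only when the learner plays the hidden arm $Z$, so the state switches at most $O(\epsilon T_Z)$ times, and each switch contributes at most $\epsilon^2/(2\sigma^2)$ (times the walk's width) to the KL divergence. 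This gives $\mathcal{D}_{KL}(\mathbb{P}^0\|\mathbb{P}^i)=O(\epsilon^3 T\log T/\sigma^2)$ uniformly over \emph{all} learners, with no case split on the number of switches; choosing $\epsilon=\Theta(K^{1/3}T^{-1/3}\log^{-1}T)$ makes this $O(1)$ and the regret bound follows. To salvage your route you would need a delay mechanism whose information leakage is bounded for every learner (or under which frequent switching provably incurs regret); the ``fully delay on switch'' rule provides neither.
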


Note that we use the notation $R_T$ rather than $R_T^{policy}$, since in \thm{thm3}, the loss sequence is oblivious, and the policy regret is the same as normal regret.

According to Yao's minimax principle, to prove \thm{thm3}, it is enough to construct a distribution over some loss sequences and a deterministic delay adversary, such that any deterministic learner can only achieve $\tilde{\Omega}(K^{1/3}T^{2/3})$ expected regret. Before we discuss our detailed construction, we firstly describe our intuition briefly. We construct the loss sequence based on a random walk. The best arm always has the loss $\epsilon$ lower than the random walk, while others always have the loss equal to the random walk. This construction can force the learner to switch between arms since the learner only observes a random walk and obtain no information if he keeps choosing one arm. The delay adversary we construct makes the observed losses in one round are the same no matter which arm is chosen in this round. Therefore, the learner can only get information from the changes of observed loss between rounds. Our delay construction makes the changes of the observed loss as consistent as possible with the changes of the random walk. So it happens only a few times that the observed loss sequence deviates from the random walk. We bound the information learner can get in this deviation through a KL divergence argument. Thus, the total information learner get is so low that it can not help the learner achieve low regret. However, a random walk can drift so much that it jumps out of $[0,1]$, making the construction of the loss sequence invalid. To address this problem, we borrow the idea of multi-scale random walk from \cite{dekel2014bandits}. Multi-scale random walk is a trade-off between random walk and i.i.d samples. Its drift can be bounded in an acceptable range while still maintaining the low information nature of the random walk. 

To clarify our constructional proof, we describe the construction of the loss sequence in \secc{41}, and the construction of the delay adversary is in \secc{42}. We sketch the proof of \thm{thm3} in \secc{proof}. Besides, we introduce the some information lower bound tools in \secc{tools}.

\subsection{Information Theoretic Tools}
\label{sec:tools}
Given two probability measures $\mathbb{P}$ and $\mathbb{Q}$ over real field $\mathbb{R}$, the distance between these two measures can be captured by their \emph{Kullback-Leibler Divergence}. 

\begin{definition}[Kullback-Leibler divergence]
Let $p,q$ be the density function of $\mathbb{P},\mathbb{Q}$ with respect to a dominating measure $\lambda$, then the Kullback-Leibler divergence from $\mathbb{Q}$ to $\mathbb{P}$ is 
\begin{align*}
    \mathcal{D}_{KL} (\mathbb{P}\| \mathbb{Q})=\int_{\mathbb{R}} p(x)\log \frac{p(x)}{q(x)} d\lambda (x)
\end{align*}
\end{definition}

We point out that the definition of Kullback-Leibler divergence does not depend on the choice of dominating measure. 

Another distance metric of two probability measures is \emph{total variation}, also known as $L_1$ distance. 

\begin{definition}[total variation]
Let $\mathcal{F}$ be a sigma field of $\mathbb{R}$, and $\mathbb{P},\mathbb{Q}$ are two probability measures over $\mathcal{F}$. The total variation between them is
\begin{align*}
    \mathcal{D}_{TV}^{\mathcal{F}} (\mathbb{P},\mathbb{Q})=\sup_{F\in \mathcal{F}}|\mathbb{P}(F)-\mathbb{Q}(F)|
\end{align*}
\end{definition}

Total variation can be upper bounded in terms of Kullback-Leibler divergence using \emph{Pinsker's Inequality}.

\begin{lemma}[Pinsker's inequality~\cite{cover1999elements}]
Let $\mathcal{F}$ be a sigma field of $\mathbb{R}$, and $\mathbb{P},\mathbb{Q}$ are two probability measures over $\mathcal{F}$. Then
\begin{align*}
    \mathcal{D}_{TV}^{\mathcal{F}}(\mathbb{P},\mathbb{Q})\leq \sqrt{\frac{1}{2}\mathcal{D}_{KL}(\mathbb{P}\| \mathbb{Q})}
\end{align*}
\end{lemma}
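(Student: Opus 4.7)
The plan is to reduce Pinsker's inequality to a one-dimensional Bernoulli statement and then dispatch the Bernoulli case by elementary calculus. First, I would use the dual characterization of total variation in terms of densities: if $p,q$ are densities of $\mathbb{P},\mathbb{Q}$ with respect to a common dominating measure $\lambda$, then setting $A = \{x : p(x) > q(x)\}$, the supremum in the definition of $\mathcal{D}_{TV}^{\mathcal{F}}(\mathbb{P},\mathbb{Q})$ is attained at $A$, so $\mathcal{D}_{TV}^{\mathcal{F}}(\mathbb{P},\mathbb{Q}) = \mathbb{P}(A) - \mathbb{Q}(A)$. Writing $p^\ast = \mathbb{P}(A)$ and $q^\ast = \mathbb{Q}(A)$, this reduces the left-hand side to $|p^\ast - q^\ast|$.

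Next I would invoke the data-processing inequality for KL divergence: pushing forward $\mathbb{P},\mathbb{Q}$ through the indicator $\mathbf{1}_A$ yields Bernoulli measures $\mathrm{Ber}(p^\ast)$ and $\mathrm{Ber}(q^\ast)$, and
\begin{equation*}
\mathcal{D}_{KL}(\mathbb{P}\|\mathbb{Q}) \geq \mathcal{D}_{KL}(\mathrm{Ber}(p^\ast)\|\mathrm{Ber}(q^\ast)) = p^\ast \log\frac{p^\ast}{q^\ast} + (1-p^\ast)\log\frac{1-p^\ast}{1-q^\ast}.
\end{equation*}
The data-processing step itself follows from the log-sum inequality applied to the partition $\{A, A^c\}$, which in turn is a direct consequence of the convexity of $x \mapsto x\log x$ via Jensen's inequality.

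It therefore suffices to show the scalar Pinsker inequality $2(p-q)^2 \leq p\log(p/q) + (1-p)\log((1-p)/(1-q))$ for all $p,q\in[0,1]$. I would fix $q$ and set $f(p) = p\log(p/q) + (1-p)\log((1-p)/(1-q)) - 2(p-q)^2$; a direct computation gives $f(q) = f'(q) = 0$ and
\begin{equation*}
f''(p) = \frac{1}{p(1-p)} - 4 \geq 0,
\end{equation*}
since $p(1-p) \leq 1/4$ on $[0,1]$. Hence $f \geq 0$ on $[0,1]$, giving the Bernoulli bound; combined with the previous two steps and a square root this yields the claim.

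The main obstacle is conceptual rather than technical: it is identifying the right reduction, namely that after extracting the event $A$ achieving the total-variation supremum, the data-processing inequality collapses the infinite-dimensional statement into a two-point one that can be handled by Taylor expansion. Once that framing is in place, the remaining work is the routine convexity check for data processing and the three-line calculus argument for the Bernoulli case.
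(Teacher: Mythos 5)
Your proof is correct. The paper does not prove this lemma at all---it imports Pinsker's inequality by citation to \cite{cover1999elements}---and your argument is the standard textbook proof from that reference: reduce to the two-point case via the maximizing event $A=\{p>q\}$ and data processing (log-sum inequality), then verify the Bernoulli inequality $2(p-q)^2\leq \mathcal{D}_{KL}(\mathrm{Ber}(p)\|\mathrm{Ber}(q))$ by the second-derivative computation $f''(p)=\frac{1}{p(1-p)}-4\geq 0$ with $f(q)=f'(q)=0$. The constants are consistent with the paper's normalization of total variation (no factor of $2$ in $\sup_F|\mathbb{P}(F)-\mathbb{Q}(F)|$), and the only implicit step worth making explicit is the choice of a common dominating measure, e.g.\ $\lambda=\mathbb{P}+\mathbb{Q}$ as the paper itself uses in its appendix.
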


\subsection{Construction of Loss Sequence}
\label{sec:41}
In this section, we describe a stochastic process called \emph{multi-scale random walk} proposed in \cite{dekel2014bandits}. It will be used to generate the loss sequence.

Let $\{\xi_t\}$ be i.i.d samples which obey Gaussian distribution $\mathcal{N}(0,\sigma^2)$, where $\sigma^2$ is the variance to be determined later. Let $\rho: [T]\rightarrow \{0\}\cup [T]$ be the \emph{parent function}. The function $\rho$ assigns each round $t$ a parent $\rho(t)$. We restrict that $\rho(t)<t$, and define
\[\begin{split}
    W_t &= W_{\rho(t)} + \xi_t \\ W_0&=0
\end{split}\]
Then $W_t$ is a stochastic process. We next define the width of this stochastic process.

\begin{definition}[cut and width]
The \emph{cut} of a parent function $\rho$ at $t$ is \[\mbox{cut}_{\rho}(t) = \{s\in[T]| \rho(s)\leq t<s\}\] width of $\rho$ is $\omega(\rho) = \max_{t\in [T]}|\mbox{cut}_{\rho}(t)|$
\end{definition}

We then introduce a stochastic process called \emph{multi-scale random walk}.

\begin{definition}[multi-scale random walk]
 Let parent function be $\rho^*(t)= t-2^{\delta_t}$  where $\delta(t)=\max\{i\geq 0| 2^i \mbox{  divides  } t\}$. Then the stochastic process $W_t$ equipped with parent function $\rho^*$ is called multi-scale random walk.
\end{definition}

Multi-scale random walk has the special property that it is not too wide and its drift range is not too large. This property is formalized in following lemmas.

\begin{lemma}[Lemma 2 from \cite{dekel2014bandits}]
The width of multi-scale random walk is bounded by $\lfloor \log_2 T\rfloor+1$.
\end{lemma}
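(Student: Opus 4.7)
The plan is to directly count the size of $\mathrm{cut}_{\rho^*}(t)$ for arbitrary $t\in[T]$ by grouping its elements according to their $2$-adic valuation.

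First I would rewrite the membership condition explicitly. By definition $s\in\mathrm{cut}_{\rho^*}(t)$ iff $\rho^*(s)\le t<s$, i.e.\ $s-2^{\delta_s}\le t<s$, which is the same as $t<s\le t+2^{\delta_s}$. So every $s$ in the cut lies in the window $(t,t+2^{\delta_s}]$ and, since $s\le T$, also satisfies $2^{\delta_s}\le s\le T$, forcing $\delta_s\in\{0,1,\dots,\lfloor\log_2 T\rfloor\}$. This already limits the admissible values of $\delta_s$ to $\lfloor\log_2 T\rfloor+1$ choices.

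Next I would show that each value of the valuation contributes at most one element. Fix $j\in\{0,1,\dots,\lfloor\log_2 T\rfloor\}$ and consider all $s$ in the cut with $\delta_s=j$. Such an $s$ is a multiple of $2^j$ lying in the half-open interval $(t,t+2^j]$. But any interval of $2^j$ consecutive integers contains exactly one multiple of $2^j$, so there is at most one candidate $s$ for each $j$. Summing over $j$ yields
\begin{equation*}
|\mathrm{cut}_{\rho^*}(t)|\le \lfloor\log_2 T\rfloor+1,
\end{equation*}
and taking the maximum over $t$ gives $\omega(\rho^*)\le\lfloor\log_2 T\rfloor+1$, as desired.

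There is really no hard step here: the argument is a one-paragraph pigeonhole, and the only place one might slip is in checking the endpoints of the window $(t,t+2^j]$ (to ensure exactly one multiple of $2^j$ lies in it) and in ruling out $\delta_s>\lfloor\log_2 T\rfloor$ (which follows from $2^{\delta_s}\mid s\le T$). Neither presents a genuine obstacle, so I expect the proof to be essentially what the authors write.
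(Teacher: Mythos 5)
Your pigeonhole argument is correct: for each valuation $j\le\lfloor\log_2 T\rfloor$ there is at most one multiple of $2^j$ in $(t,t+2^j]$, which gives the bound. The paper itself states this lemma without proof (it is imported from \cite{dekel2014bandits}), and your argument is essentially the one given there.
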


\begin{lemma}[Lemma 1 from \cite{dekel2014bandits}]
\label{lem:range}
Let $W_t$ be the multi-scale random walk, then $\forall \delta\in (0,1)$
\[\mathbb{P}\left(\max_{t\in [T]}  |W_t|\leq \sigma \sqrt{2(\log T+1 )\log \frac{T}{\delta}}\right)\geq 1-\delta \]
\end{lemma}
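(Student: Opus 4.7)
The plan is to exploit the fact that the multi-scale random walk, despite being indexed over $T$ time steps, expresses each $W_t$ as a \emph{short} sum of independent Gaussian increments, so each $W_t$ is itself Gaussian with a variance that grows only logarithmically in $T$. A standard Gaussian tail bound combined with a union bound over $t\in[T]$ will then give the result.

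First I would unroll the recursion $W_t = W_{\rho^*(t)}+\xi_t$ down to the root: writing $t_0=t$, $t_{i+1}=\rho^*(t_i)$ until we reach $t_k=0$, we get $W_t = \sum_{i=0}^{k-1}\xi_{t_i}$, a sum of independent $\mathcal{N}(0,\sigma^2)$ increments. Hence $W_t\sim\mathcal{N}(0, k_t\sigma^2)$ where $k_t$ is the depth of $t$ in the parent tree rooted at $0$. The key combinatorial step is to bound $k_t$ by $\lfloor\log_2 T\rfloor+1$. Because $\rho^*(t)=t-2^{\delta_t}$ strips off the lowest set bit of $t$ in its binary representation (since $\delta_t$ is the largest $i$ with $2^i\mid t$), the depth $k_t$ equals the number of $1$-bits in the binary expansion of $t$, which is at most $\lfloor\log_2 t\rfloor+1\le \lfloor\log_2 T\rfloor+1$.

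With this variance bound in hand, the standard Gaussian tail bound gives, for every $t\in[T]$ and every $u>0$,
\[
\mathbb{P}(|W_t|>u)\;\le\;2\exp\!\left(-\frac{u^2}{2\sigma^2(\log_2 T+1)}\right).
\]
A union bound over $t\in[T]$ yields $\mathbb{P}(\max_{t\in[T]}|W_t|>u)\le 2T\exp(-u^2/(2\sigma^2(\log_2 T+1)))$. Setting the right-hand side equal to $\delta$ and solving gives $u=\sigma\sqrt{2(\log_2 T+1)\log(2T/\delta)}$, which matches the claimed bound $\sigma\sqrt{2(\log T+1)\log(T/\delta)}$ up to the harmless constants that get absorbed into the stated form.

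I don't expect any real obstacle: the only ``interesting'' step is recognizing that the parent function $\rho^*(t)=t-2^{\delta_t}$ peels off the least-significant set bit, which makes the tree depth equal to the binary popcount and therefore at most $\lfloor\log_2 T\rfloor+1$. Everything else is a textbook Gaussian tail plus union bound computation, so I would present the combinatorial observation as the crux and keep the probabilistic calculation brief.
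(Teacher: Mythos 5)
Your argument is correct and is essentially the standard proof of this lemma (which the paper imports from \cite{dekel2014bandits} without reproving): unrolling $\rho^*$ peels off the least-significant set bit of $t$, so $W_t$ is a sum of $\mathrm{popcount}(t)\le\lfloor\log_2 t\rfloor+1$ independent $\mathcal{N}(0,\sigma^2)$ increments, and a Gaussian tail bound plus a union bound over $t\in[T]$ finishes. The only nit is the extra factor of $2$ inside your logarithm, which makes your bound nominally weaker than the stated one; it disappears if you use $\mathbb{P}(|X|>u)\le e^{-u^2/(2v)}$ (valid for all $u\ge 0$, since $\mathbb{P}(X>u)\le\tfrac12 e^{-u^2/(2v)}$), giving exactly $\sigma\sqrt{2(\log_2 T+1)\log(T/\delta)}$.
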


To specify the loss sequence of each arm, we define a uniform random variable $Z$ valued on $\{0\}\cup \mathcal{A}$. $Z$ tells us which arm is the best. When $Z=0$, all arms have the same loss all the time. We define untruncated loss of arm $a\in \mathcal{A}$ be $l_t'(a)=W_t +\frac{3}{4}-\epsilon \cdot \mathbb{I}[Z=a]$. Where $\epsilon$ is also a parameter to be determined later and $W_t$ is the multi-scale random walk defined above. From now on, we will always use $W_t$ to represent the multi-scale random walk. 

However, this loss may jump out of the bounded interval $[0,1]$, so we truncate it to make sure that loss is in $[0,1]$. For some technical reasons, we further require that loss is greater than $\frac{1}{2}$. The true loss sequence is $l_t(a) = \mbox{trunc}_{[\frac{1}{2},1]}\left(l_t'(a)\right)$
where
\[
\mbox{trunc}_{[a,b]}(x)\boldsymbol{=}\begin{cases}
a & x<a\\
b & x>b\\
x & \mbox{otherwise}
\end{cases}\]
\subsection{Construction of Delay Adversary}
\label{sec:42}
In this section, we describe our construction of delay adversary. The delay adversary is constructed so that it can mislead the learner. To achieve this goal, the delay adversary has two states: \emph{low loss state} and \emph{high loss state}. For $t$-th round, if delay adversary is at low loss state, it splits $l_t(a)=l_t^{(0)}(a)+l_t^{(1)}(a)$ for each arm $a\in \mathcal{A}$, so that 
\begin{equation}\label{eq:cons1}
l_t^{(0)}(a)=\mbox{trunc}_{[\frac{1}{2},1]}(W_{t}+\frac{3}{4}-\epsilon)-l_{t-1}^{(1)}(a_{t-1})\end{equation} If delay adversary is at high loss state, it splits $l_t(a)=l_t^{(0)}(a)+l_t^{(1)}(a)$ for each arm $a\in \mathcal{A}$, so that 
\begin{equation}\label{eq:cons2}
    l_t^{(0)}(a)=\mbox{trunc}_{[\frac{1}{2},1]}(W_{t}+\frac{3}{4})-l_{t-1}^{(1)}(a_{t-1})
\end{equation}

However, sometimes $l_t^{(0)}(a)$ computed by equation \eq{cons1} and equation \eq{cons2} might not lie in $[0, l_t(a)]$, which means the splitting is not valid.  For example, if $l_{t-1}^{(1)}(a_{t-1})=0$, and the delay adversary is at high loss state in round $t$, then consider the best arm $Z$, $l_t(Z)=\mbox{trunc}_{[\frac{1}{2},1]}(W_t+\frac{3}{4}-\epsilon)$ is strictly less than $\mbox{trunc}_{[\frac{1}{2},1]}(W_{t}+\frac{3}{4})$ when $|W_t|< \frac{1}{4}$, so $l_t^{(0)}(Z)$ computed according to \eq{cons2} is greater than $l_t(Z)$, which is invalid. Similar situation happens when delay adversary is at low loss state and $l_{t-1}^{(1)}(a_{t-1}) > W_t+\frac{3}{4}-\epsilon$, $|W_t|< \frac{1}{4}-\epsilon$.

In order to avoid the above situation, the delay adversary will use the following procedure to switch its loss state wisely. Let $S_t$ be the state of delay adversary in round $t$. In round $t$, before splitting the true loss, delay adversary checks $l_{t-1}^{(1)}(a_{t-1})$, the loss delayed from the previous round. If $l_{t-1}^{(1)}(a_{t-1})< \epsilon$ and $S_{t-1}=\mbox{high loss}$, switch the state to low loss, i.e. $S_t=\mbox{low loss}$. If  $l_{t-1}^{(1)}(a_{t-1})> \frac{1}{4}-\epsilon$ and $S_{t-1}=\mbox{low loss}$, switch the state to high loss, i.e. $S_t=\mbox{high loss}$. Otherwise, keep the state unchanged, $S_t=S_{t-1}$. This procedure keeps $l_{t}^{(1)}(a)\in [0,1/4]$ for all $a$ and $t$. Remember we applied a $[1/2,1]$ truncation on the true loss, this lead to $l_t^{(1)}(a)\leq 1/4\leq 1/2\leq l_t(a)$, thus $l_t^{(1)}\in [0,l_t(a)], \forall t,a$. It shows that this state switching procedure is valid. Moreover, we let the delay adversary start from high loss state.

As we will show in \secc{proof}, the times of state switches of the delay adversary is closely related to the information learner can get. Now we aim to upper bound the times of state switches. Firstly, we describe the trajectory of $l_t^{(1)}(a)$ for each $a\in \mathcal{A}$ in the following lemma.
\begin{lemma}
\label{lem:delay1}
Suppose delay adversary is at low loss state in round $t$, then $l_t^{(1)}(Z)=l_{t-1}^{(1)}(a_{t-1})$ and $ l_{t-1}^{(1)}(a_{t-1})+\epsilon \geq l_{t}^{(1)} (a)\geq l_{t-1}^{(1)}(a_{t-1}),\  \forall a\neq Z $. Suppose delay adversary is at high loss state in round $t$, then $l_t^{(1)}(a)=l_{t-1}^{(1)}(a_{t-1}),\forall a\neq Z$ and $ l_{t-1}^{(1)}(a_{t-1})-\epsilon\leq l_{t}^{(1)} (Z)\leq l_{t-1}^{(1)}(a_{t-1})$.
\end{lemma}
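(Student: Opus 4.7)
The plan is to prove the lemma by a straightforward case analysis, using only the definition $l_t^{(1)}(a) = l_t(a) - l_t^{(0)}(a)$ together with the formulas \eq{cons1} and \eq{cons2} and the definition of the truncated loss $l_t(a) = \mathrm{trunc}_{[1/2,1]}(W_t + 3/4 - \epsilon \cdot \mathbb{I}[Z=a])$. I first rewrite, in each of the two states, $l_t^{(1)}(a)$ as a difference of two truncations plus $l_{t-1}^{(1)}(a_{t-1})$. For the low loss state this gives
\[
l_t^{(1)}(a) = \mathrm{trunc}_{[1/2,1]}\!\bigl(W_t + \tfrac{3}{4} - \epsilon \cdot \mathbb{I}[Z=a]\bigr) - \mathrm{trunc}_{[1/2,1]}\!\bigl(W_t + \tfrac{3}{4} - \epsilon\bigr) + l_{t-1}^{(1)}(a_{t-1}),
\]
and symmetrically for the high loss state with $W_t + 3/4$ in place of $W_t + 3/4 - \epsilon$ in the subtracted term.

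Next, I split on whether $a = Z$ or $a \neq Z$. In the low state with $a = Z$, the two truncations are identical and the identity $l_t^{(1)}(Z) = l_{t-1}^{(1)}(a_{t-1})$ falls out. In the high state with $a \neq Z$ the same cancellation occurs. The only nontrivial cases are low state with $a \neq Z$ and high state with $a = Z$, where the remaining difference is $\mathrm{trunc}_{[1/2,1]}(W_t + 3/4) - \mathrm{trunc}_{[1/2,1]}(W_t + 3/4 - \epsilon)$ (or its negative). Since $\mathrm{trunc}_{[1/2,1]}$ is non-decreasing and $1$-Lipschitz, this difference lies in $[0, \epsilon]$, which gives the claimed bounds $l_{t-1}^{(1)}(a_{t-1}) \leq l_t^{(1)}(a) \leq l_{t-1}^{(1)}(a_{t-1}) + \epsilon$ in the low state, and $l_{t-1}^{(1)}(a_{t-1}) - \epsilon \leq l_t^{(1)}(Z) \leq l_{t-1}^{(1)}(a_{t-1})$ in the high state.

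The only subtlety is to justify that the formulas \eq{cons1} and \eq{cons2} actually define a valid split, i.e., that $l_t^{(0)}(a) \in [0, l_t(a)]$ so that $l_t^{(1)}(a) \geq 0$, so I should note that the state-switching rule described in the paragraph preceding the lemma precisely maintains the invariant $l_t^{(1)}(a) \in [0, 1/4]$, which together with $l_t(a) \geq 1/2$ (from the $[1/2,1]$-truncation) guarantees the splits are legal; this is used by the lemma but not part of its statement, and therefore I can take for granted the hypothesis that we are indeed in a well-defined state when we apply \eq{cons1} or \eq{cons2}. There is no real obstacle here beyond bookkeeping; the proof is almost a direct unpacking of the definitions, with the 1-Lipschitz, monotone behavior of the truncation operator doing the only nontrivial work.
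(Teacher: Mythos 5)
Your proof is correct and follows essentially the same route as the paper's: unpack $l_t^{(1)}(a)=l_t(a)-l_t^{(0)}(a)$ using \eq{cons1}/\eq{cons2}, cancel when the two truncations coincide, and bound the residual difference of truncations by monotonicity and $1$-Lipschitzness of $\mathrm{trunc}_{[1/2,1]}$. Your explicit remark that validity of the split is guaranteed by the state-switching invariant is a fair (and slightly more careful) account of what the paper handles in the surrounding text.
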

\begin{proof}
When delay adversary is at low loss state,  one need to set $l_t^{(0)}(a)=\mbox{trunc}_{[\frac{1}{2},1]}(W_{t}+\frac{3}{4}-\epsilon)-l_{t-1}^{(1)}(a_{t-1})$, since $l_t(Z)=\mbox{trunc}_{[\frac{1}{2},1]}(W_{t}+\frac{3}{4}-\epsilon)$, $l_t^{(1)}(Z) = l_t(Z)-l_t^{(0)}(Z)=l_{t-1}^{(1)}(a_{t-1})$. For arm $a\neq Z$, $l_t(a) = \mbox{trunc}_{[\frac{1}{2},1]}(W_t +\frac{3}{4})$, then $l_t^{(1)}(a)=l_t(a)-l_t^{(0)}(a)=l_{t-1}^{(1)}(a_{t-1})+\mbox{trunc}_{[\frac{1}{2},1]}(W_t +\frac{3}{4})-\mbox{trunc}_{[\frac{1}{2},1]}(W_t +\frac{3}{4}-\epsilon)\leq l_{t-1}^{(1)}(a_{t-1})+ \epsilon$.

When delay adversary is at high loss state, the proof is similar.
\end{proof}
Then we bound the times of state switches as follow.
\begin{lemma}
\label{lem:switches}
If $Z=i$, delay adversary performs at most $\frac{8\epsilon T_i}{1-8\epsilon}$ state switches, where $T_i$ denotes the number of rounds arm $i$ has been been selected.
\end{lemma}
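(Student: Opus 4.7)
The plan is to use $l_t^{(1)}(a_t)$ as a potential whose monotone drift can be charged against the rounds in which arm $Z$ is pulled. By \lem{delay1}, inside any low-loss block the potential is non-decreasing and changes (by at most $\epsilon$) only on rounds where $a_t\neq Z$; inside any high-loss block it is non-increasing and changes (by at most $\epsilon$) only on rounds where $a_t=Z$. So in a high-loss block the only rounds that actually move the potential are precisely the rounds that add to $T_i$.

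Next I would charge each low-to-high switch together with the ensuing high-to-low switch against the rounds of arm $Z$ inside the high-loss block that they delimit. The switch rule forces $l_{s-1}^{(1)}(a_{s-1})>1/4-\epsilon$ at the start $s$ of such a block and $l_{e-1}^{(1)}(a_{e-1})<\epsilon$ at its end $e$. By the monotonicity just noted, the potential must drop by at least $1/4-2\epsilon$ over the rounds $s,\ldots,e-1$, and each round contributing to this drop has $a_\bullet=Z$ and contributes at most $\epsilon$. Hence every complete high-loss block consumes at least $(1/4-2\epsilon)/\epsilon$ rounds in which arm $Z$ is selected.

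Summing this accounting over all complete high-loss blocks and using the fact that the total number of rounds in which arm $Z=i$ is selected is $T_i$, the number $N_{HL}$ of high-to-low switches satisfies
\[
N_{HL}\cdot\frac{1/4-2\epsilon}{\epsilon}\leq T_i,
\]
which rearranges to $N_{HL}\leq 4T_i\epsilon/(1-8\epsilon)$. Because the two kinds of switches strictly alternate, $|N_{LH}-N_{HL}|\leq 1$, so the total count $N=N_{HL}+N_{LH}\leq 2N_{HL}\leq 8T_i\epsilon/(1-8\epsilon)$, as claimed.

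The main subtlety will be the boundary accounting: the very first high-loss block has no preceding low-to-high switch, and if the adversary starts with $l_0^{(1)}=0$ this initial block may be exited immediately, supplying no $Z$-rounds at all. I would handle this either by discarding that first block from the charging (losing at most one switch, which is absorbed into the leading $8T_i\epsilon/(1-8\epsilon)$ term by slight slackness in the rounding of $(1/4-2\epsilon)/\epsilon$) or by observing that the initial condition already satisfies the high-to-low triggering inequality and therefore does not cost us any $Z$-rounds to reach. Apart from this bookkeeping, the proof is a direct potential/charging argument.
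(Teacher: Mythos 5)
Your proposal is correct and follows essentially the same argument as the paper: both track $l_t^{(1)}(a_t)$ as a potential that, in a high-loss block, decreases only on rounds where arm $Z$ is chosen and by at most $\epsilon$ per round, so each high-to-low switch costs at least $(1/4-2\epsilon)/\epsilon = (1-8\epsilon)/(4\epsilon)$ pulls of arm $Z$, and alternation of states doubles the count. Your explicit handling of the initial high-loss block is a boundary case the paper's proof silently elides, but it does not change the approach.
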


\begin{proof}
Consider the round $t$ in which delay adversary just switches to high loss state, which means $l_{t-1}^{(1)}(a_{t-1})>\frac{1}{4}-\epsilon$. According to \lem{delay1}, compared to $l_{t-1}^{1}(a_{t-1})$, $l_t^{(1)}(a_t)$ may decrease only when learner selects $Z$, and the value of decreasing is at most $\epsilon$. Since the delay adversary only switches its state from high loss to low loss when $l_t^{(1)}(a_t)$ decreases to less than $\epsilon$, learner must select $Z$ for at least $(\frac{1}{4}-2\epsilon)/\epsilon=\frac{1}{4\epsilon}-2$ rounds to make delay adversary switch its state. Then the number of times to switch to low loss state is at most $T_i/(\frac{1}{4\epsilon}-2)=\frac{4\epsilon T_i}{1-8\epsilon}$. Since the state is alternating, the total number of switches is at most $\frac{8\epsilon T_i}{1-8\epsilon}$.
\end{proof}

\subsection{Proof of \texorpdfstring{\thm{thm3}}{}}
\label{sec:proof}
The choice of a deterministic learner in $t$-th round is determined by the observed loss sequence $l_1^o(a_1),\cdots,l_{t-1}^o(a_{t-1})$ before $t$-th round. Let $\mathcal{F}_t$ be the $\sigma$-field generated by $l_1^o(a_1),\cdots,l_{t-1}^o(a_{t-1})$ and $\mathcal{F}\triangleq \mathcal{F}_T$. Let $\mathbb{P}$ be the distribution of observed loss sequence $\{l_t^o(a_t)\}_{t=1}^T$, $\mathbb{P}_t$ be the distribution of $l_t^o(a_t)$ conditioned on the observation history $l_1^o(a_1),\cdots,l_{t-1}^o(a_{t-1})$. Let $\mathbb{P}^i = \mathbb{P}(\cdot| Z=i)$ and $\mathbb{P}_{t}^{i}=\mathbb{P}_{t}(\cdot|\ Z=i)$. Our first step is to bound the 
total variation between $\mathbb{P}^i$ and $\mathbb{P}^0$, denoted by $\mathcal{D}_{TV}^{\mathcal{F}}(\mathbb{P}^0, \mathbb{P}^i)$. We show the total variation is upper bounded by the width of the parent function and the number of times the arm $i$ is chosen, see Lemma 4. Then, intuitively, if the total variation is small, which means that it is hard to distinguish distribution $\mathbb{P}^0$ and $\mathbb{P}^i$, the regret will be large.

Following lemma shows if we truncate two Gaussian measure, their KL-divergence can not become larger. This fact is quite intuitive, since by truncation, we loss some information about the Gaussian measure. We deffer the proof of this lemma to the Appendix.

\begin{lemma}[KL divergence bound between truncated gaussian]
\label{lem:trunc}
Let $\mathbb{P}$ be the $[a,b]$ truncation of gaussian measure with mean $\mu_p$ and variance $\sigma$, and $\mathbb{Q}$ be the $[a,b]$ truncation of gaussian measure with mean $\mu_q$ and variance $\sigma$, then $\mathcal{D}_{KL}(\mathbb{P}\| \mathbb{Q}) \leq \frac{|\mu_p-\mu_q|^2}{2\sigma^2}$
\end{lemma}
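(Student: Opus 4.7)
The plan is to write $K(\mu) := \mathcal{D}_{KL}(\tilde{P}_\mu \,\|\, \tilde{P}_{\mu_q})$ as a function of the first-argument mean, compute its derivative in closed form, and integrate from $\mu_q$ to $\mu_p$. Here $\tilde{P}_\mu$ denotes the $[a,b]$-truncation of $\mathcal{N}(\mu,\sigma^2)$, with density $\tilde{p}_\mu(x) = p_\mu(x)/F(\mu)$ where $p_\mu$ is the Gaussian density and $F(\mu) := \int_a^b p_\mu(x)\,dx$; write $\tilde{\mu}_\mu := \mathbb{E}_{\tilde{P}_\mu}[X]$ for the truncated mean. Starting from the score identity $\partial_\mu \log \tilde{p}_\mu(x) = (x - \tilde{\mu}_\mu)/\sigma^2$ (which follows from $\partial_\mu \log p_\mu = (x-\mu)/\sigma^2$ together with $\partial_\mu \log F(\mu) = (\tilde{\mu}_\mu - \mu)/\sigma^2$, the latter by differentiating under the integral), differentiating $K(\mu)$ under the integral and exploiting that the score is zero-mean under $\tilde{P}_\mu$ (which kills the second product-rule term) yields
\[K'(\mu) = \int_a^b \tilde{p}_\mu(x)\,\frac{x-\tilde{\mu}_\mu}{\sigma^2}\,\log\frac{\tilde{p}_\mu(x)}{\tilde{p}_{\mu_q}(x)}\,dx.\]
Substituting the explicit ratio $\log(\tilde{p}_\mu/\tilde{p}_{\mu_q}) = (\mu-\mu_q)(2x-\mu-\mu_q)/(2\sigma^2) + C$ (the constant $C$ drops out against the zero-mean factor $x-\tilde{\mu}_\mu$), decomposing $2x-\mu-\mu_q = 2(x-\tilde{\mu}_\mu) + (2\tilde{\mu}_\mu-\mu-\mu_q)$, and collecting terms gives the clean identity
\[K'(\mu) = \frac{(\mu-\mu_q)\,\mathrm{Var}_{\tilde{P}_\mu}(X)}{\sigma^4}.\]

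The main obstacle is the one-sided variance bound $\mathrm{Var}_{\tilde{P}_\mu}(X) \leq \sigma^2$; this is not automatic because conditioning a measure on a subset can move its variance in either direction in general. My preferred justification is the Brascamp--Lieb variance inequality applied to $\tilde{p}_\mu$, which is strongly log-concave on the convex support $[a,b]$ with $-(\log \tilde{p}_\mu)''(x) = 1/\sigma^2$: the inequality $\mathrm{Var}_{\tilde{P}_\mu}(f) \leq \sigma^2\,\mathbb{E}_{\tilde{P}_\mu}[(f')^2]$ applied with $f(x) = x$ gives exactly $\mathrm{Var}_{\tilde{P}_\mu}(X) \leq \sigma^2$. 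As a self-contained backup, a Stein-type integration by parts using $(x-\mu)p_\mu(x) = -\sigma^2 p_\mu'(x)$ produces the explicit formula $\mathbb{E}_{\tilde{P}_\mu}[(X-\mu)^2] = \sigma^2 - \sigma^2[(b-\mu)p_\mu(b) - (a-\mu)p_\mu(a)]/F(\mu)$, and combined with the closed form $\tilde{\mu}_\mu - \mu = \sigma^2(p_\mu(a)-p_\mu(b))/F(\mu)$ and the identity $\mathrm{Var}(X) = \mathbb{E}[(X-\mu)^2] - (\tilde{\mu}_\mu-\mu)^2$ this recovers the bound after a short algebraic check.

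With the variance bound in hand, $K'(\mu)$ has the same sign as $\mu-\mu_q$ and satisfies $|K'(\mu)| \leq |\mu-\mu_q|/\sigma^2$. Integrating and using $K(\mu_q)=0$,
\[K(\mu_p) = \int_{\mu_q}^{\mu_p} K'(\mu)\,d\mu \leq \int_{\mu_q}^{\mu_p} \frac{\mu-\mu_q}{\sigma^2}\,d\mu = \frac{(\mu_p-\mu_q)^2}{2\sigma^2},\]
with the obvious sign handling when $\mu_p < \mu_q$ (on $[\mu_p,\mu_q]$ the integrand $\mu-\mu_q$ is nonpositive and the same computation yields the same absolute-value bound). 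This delivers the claimed KL divergence bound.
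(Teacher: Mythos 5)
There is a genuine mismatch between the object your proof analyzes and the object the lemma is about. You take ``the $[a,b]$ truncation of a Gaussian'' to mean the \emph{conditional} (renormalized) truncated normal, with density $p_\mu(x)/F(\mu)$ supported on $[a,b]$. In this paper, however, $\mathrm{trunc}_{[a,b]}$ is the \emph{clipping} map ($x\mapsto a$ for $x<a$, $x\mapsto b$ for $x>b$), so the measure $\mathbb{P}$ in the lemma is a censored Gaussian: it keeps the unnormalized Gaussian density on $(a,b)$ and places atoms of mass $\Phi_p(a)$ at $a$ and $1-\Phi_p(b)$ at $b$. This is unambiguous both from the application (in \lem{TV} the observed losses are $\mathrm{trunc}_{[\frac12,1]}$ of a Gaussian perturbation) and from the paper's own proof, whose densities with respect to $\lambda=\mathbb{P}+\mathbb{Q}$ explicitly carry the boundary atoms. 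Your argument, as written, says nothing about the atom contributions $\Phi_p(a)\log\frac{\Phi_p(a)}{\Phi_q(a)}+(1-\Phi_p(b))\log\frac{1-\Phi_p(b)}{1-\Phi_q(b)}$, so it does not establish the lemma as used. For the censored version the proof is actually much shorter than yours: clipping is a deterministic function of the underlying Gaussian, so the data processing inequality gives $\mathcal{D}_{KL}(\mathbb{P}\|\mathbb{Q})\le \mathcal{D}_{KL}(\mathcal{N}(\mu_p,\sigma^2)\|\mathcal{N}(\mu_q,\sigma^2))=\frac{|\mu_p-\mu_q|^2}{2\sigma^2}$; the paper's proof is exactly this, implemented by hand via convexity of the KL integrand (the log-sum inequality) applied to the tail masses.

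That said, your proof of the statement you did prove appears sound, and it is not a triviality: for the renormalized conditional truncation, data processing does \emph{not} apply (conditioning is not a Markov kernel applied to the Gaussian), so something like your argument is genuinely needed there. The score identity $\partial_\mu\log\tilde p_\mu=(x-\tilde\mu_\mu)/\sigma^2$, the resulting formula $K'(\mu)=(\mu-\mu_q)\,\mathrm{Var}_{\tilde P_\mu}(X)/\sigma^4$, the Brascamp--Lieb bound $\mathrm{Var}_{\tilde P_\mu}(X)\le\sigma^2$ for the log-concave conditional density, and the final integration are all correct. But to repair the submission you would either need to switch to the data-processing/log-sum argument for the censored measure, or supplement your computation with a separate bound on the atom terms (e.g., via the chain rule over the three-way partition $\{<a\},(a,b),\{>b\}$), since your current bound covers only the middle piece.
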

With the help of \lem{trunc}, we can prove \lem{TV}.
\begin{lemma}
\label{lem:TV}
$\mathcal{D}_{TV}^\mathcal{F}(\mathbb{P}^0,\mathbb{P}^i)\leq (\epsilon/\sigma)\sqrt{\frac{2\omega(\rho^*)\epsilon \mathbb{E}_{\mathbb{P}^0}[T_i]}{1-8\epsilon}}$, where $\omega(\rho^*)$ is the width of $\rho^*$.
\end{lemma}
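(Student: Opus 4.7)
The plan is to apply Pinsker's inequality to reduce the TV bound to a KL bound, decompose the KL via the chain rule into per-round conditional contributions, control each using \lem{trunc}, and count the rounds that actually contribute using \lem{switches}.

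First I would invoke Pinsker's inequality, $\mathcal{D}_{TV}^{\mathcal{F}}(\mathbb{P}^0,\mathbb{P}^i)\le\sqrt{\tfrac{1}{2}\mathcal{D}_{KL}(\mathbb{P}^0\|\mathbb{P}^i)}$, followed by the chain rule $\mathcal{D}_{KL}(\mathbb{P}^0\|\mathbb{P}^i)=\sum_{t=1}^T\mathbb{E}_{\mathbb{P}^0}[\mathcal{D}_{KL}(\mathbb{P}_t^0\|\mathbb{P}_t^i)]$, so it suffices to control the conditional KL of the $t$-th observation. By definitions \eq{cons1}--\eq{cons2}, the telescoping $l_t^o(a_t)=l_t^{(0)}(a_t)+l_{t-1}^{(1)}(a_{t-1})$ collapses to $\mathrm{trunc}_{[1/2,1]}(W_t+\tfrac{3}{4}-\epsilon\,\mathbb{I}[S_t=\mathrm{low}])$, which does not depend on the arm chosen in round $t$. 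Since $W_t=W_{\rho^*(t)}+\xi_t$ with $\xi_t\sim\mathcal{N}(0,\sigma^2)$ fresh at time $t$, the conditional law of $l_t^o(a_t)$ given the history is a $[1/2,1]$-truncated Gaussian of variance $\sigma^2$. The means of $\mathbb{P}_t^0$ and $\mathbb{P}_t^i$ coincide whenever the delay states under the two measures agree in round $t$, and differ by exactly $\epsilon$ otherwise, so \lem{trunc} gives $\mathcal{D}_{KL}(\mathbb{P}_t^0\|\mathbb{P}_t^i)\le\epsilon^2/(2\sigma^2)$ in any "bad" round and $0$ otherwise.

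Next I would count the bad rounds by coupling both measures through the same realization of $W$. Under $\mathbb{P}^0$ (so $Z=0$) the delayed loss $l_t^{(1)}(a_t)$ is monotonically non-decreasing in $t$ (unchanged in high state, growing by at most $\epsilon$ in low state), hence the state evolves once into high and then stays there. Under $\mathbb{P}^i$, by \lem{delay1} the only mechanism that decreases $l_t^{(1)}$---and therefore the only way the state can switch from high back to low---is the learner selecting arm $i$. \lem{switches} then caps the number of state switches under $\mathbb{P}^i$ at $\tfrac{8\epsilon T_i}{1-8\epsilon}$. The multi-scale structure propagates each switch to at most $\omega(\rho^*)$ subsequent rounds in the chain-rule decomposition (those whose increment $\xi_t$ still connects to an ancestor segment altered by the switch), so the expected number of bad rounds is at most $\omega(\rho^*)\cdot\tfrac{8\epsilon\,\mathbb{E}_{\mathbb{P}^0}[T_i]}{1-8\epsilon}$. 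Combining with the per-round KL bound gives $\mathcal{D}_{KL}(\mathbb{P}^0\|\mathbb{P}^i)\le\tfrac{\epsilon^2}{2\sigma^2}\cdot\omega(\rho^*)\cdot\tfrac{8\epsilon\,\mathbb{E}_{\mathbb{P}^0}[T_i]}{1-8\epsilon}$, and Pinsker yields the claimed $(\epsilon/\sigma)\sqrt{2\omega(\rho^*)\epsilon\,\mathbb{E}_{\mathbb{P}^0}[T_i]/(1-8\epsilon)}$.

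The main obstacle is rigorously justifying the width factor $\omega(\rho^*)$: one must argue that a single state discrepancy perturbs the conditional distribution of future observations only for a bounded stretch of rounds---those whose random-walk increment $\xi_t$ originates in a segment whose ancestor values were altered---after which the fresh Gaussian noise in $\rho^*$'s deeper branches resets the dependency. This is precisely what the multi-scale random walk buys us; with a plain random walk the width would be $\Theta(T)$ and the bound would become vacuous.
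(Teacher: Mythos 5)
Your overall architecture (Pinsker, chain rule, the truncated-Gaussian KL bound of \lem{trunc}, and a switch count via \lem{switches} multiplied by the width) is the same as the paper's, but there is a genuine gap in the step that identifies which rounds contribute to the KL sum. You assert that $\mathcal{D}_{KL}(\mathbb{P}_t^0\|\mathbb{P}_t^i)$ is nonzero exactly when ``the delay states under the two measures disagree in round $t$,'' i.e.\ whenever $S_t^i=\mbox{low}$ (under $\mathbb{P}^0$ the state stays high). That is not the right criterion. Conditioned on the observation history, the learner already knows $l^o_{\rho^*(t)}(a_{\rho^*(t)})=\mbox{trunc}_{[\frac{1}{2},1]}\left(W_{\rho^*(t)}+\tfrac{3}{4}-\epsilon\,\mathbb{I}[S^i_{\rho^*(t)}=\mbox{low}]\right)$, so under $\mathbb{P}^i$ the conditional law of $l^o_t(a_t)$ is a truncated Gaussian centered at $l^o_{\rho^*(t)}(a_{\rho^*(t)})+\epsilon\left(\mathbb{I}[S^i_{\rho^*(t)}=\mbox{low}]-\mathbb{I}[S^i_t=\mbox{low}]\right)$: the shift present at the parent round is absorbed into the conditioning. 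Hence the conditional means of $\mathbb{P}^0_t$ and $\mathbb{P}^i_t$ differ (by exactly $\epsilon$) if and only if $S^i_t\neq S^i_{\rho^*(t)}$, which is the criterion the paper uses. Under your criterion, a long stretch of consecutive low-state rounds whose parents also lie in that stretch would all be declared ``bad,'' yet for those rounds the two conditional laws coincide.

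This matters because your counting step only bounds the paper's quantity, not yours. The number of rounds with $S^i_t\neq S^i_{\rho^*(t)}$ is at most $\omega(\rho^*)$ times the number of state switches, since a switch at time $s$ can create such a discrepancy only for the rounds $t$ with $\rho^*(t)<s\leq t$, i.e.\ those in the cut at $s$; \lem{switches} then finishes the argument. By contrast, the number of rounds with $S_t^i=\mbox{low}$ is the total occupation time of the low-loss state, and a single entry into the low state already forces at least about $\frac{1}{4\epsilon}$ such rounds (the phase persists until $l^{(1)}$ climbs above $\tfrac{1}{4}-\epsilon$, gaining at most $\epsilon$ per round, and can be prolonged further when arm $i$ is played). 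Since $\frac{1}{4\epsilon}\gg\omega(\rho^*)=O(\log T)$ for the relevant choice of $\epsilon$, the bound ``switches times width'' does not control your count, and the resulting KL bound would be too weak for the theorem. Your closing paragraph shows you have the right intuition---a switch perturbs only the rounds whose increment connects back across it---but to make the proof correct you must state the per-round criterion as $S^i_t\neq S^i_{\rho^*(t)}$ and derive the width factor from the cut/width definition, rather than from the claim that every low-state round contributes $\epsilon^2/(2\sigma^2)$.
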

\begin{proof}
We calculate KL divergence from $\mathbb{P}^{i}$ to $\mathbb{P}^{0}$ first, by chain rule of KL divergence
\begin{align*}
    \mathcal{D}_{KL}(\mathbb{P}^{0}||\mathbb{P}^{i})=\sum_{t=1}^{T}\mathbb{E}_{\mathbb{P}^0}\left[\mathcal{D}_{KL}(\mathbb{P}_{t}^{0}||\mathbb{P}_{t}^{i})\right]
\end{align*}

For any fixed deterministic learner and condition on any realized observation sequence $l^o_{1}(a_1),l^o_{2}(a_2),$ $l_{3}^o(a_{3}),\cdots,l_{t-1}^o(a_{t-1})$, let $S_t^i$ be the state of delay adversary when $Z=i$. We claim 
\begin{align*}
    \mathcal{D}_{KL}(\mathbb{P}^0_t\|\mathbb{P}^i_t)\leq \frac{\epsilon^{2}}{2\sigma^{2}}\mathbb{I}\{S_{t}^{i}\neq S_{\rho^*(t)}^{i}\}
\end{align*}
If $Z=0$, all arms are high loss, so delay adversary just need to set $l_t^{(1)}(a)=l_t(a)$ to maintain high observed loss. Therefore, $l_t^{(1)}(a_t)$ is always $0$, the delay adversary keeps high loss state unchanged. The observed loss at round $t$ is high loss $l_t^o(a_t)=\mbox{trunc}_{[\frac{1}{2},1]}\left(l_{\rho^*(t)}^o(a_{\rho^*(t)})+\xi_t\right)$.

Now let us consider $\mathbb{P}_t^i$. If $Z=i$ and $S_t^i = S_{\rho^*(t)}^i$, then observed loss at round $t$ is $l_t^o(a_t)=\mbox{trunc}_{[\frac{1}{2},1]}\left(l_{\rho^*(t)}^o(a_{\rho^*(t)})+\xi_t\right)$. $\mathbb{P}^{0}_t,\mathbb{P}^{i}_t$ are truncated gaussian distributions with same variance and mean, therefore $\mathcal{D}_{KL}(\mathbb{P}^{0}_t||\mathbb{P}^{i}_t)=0$. But if $S_{\rho^*(t)}^i = \mbox{high loss}$, $S_{t}^i = \mbox{low loss}$, then 
\[l_t^o(a_t)=\mbox{trunc}_{[\frac{1}{2},1]}\left(l_{\rho^*(t)}^o(a_{\rho^*(t)})+\xi_t-\epsilon\right)\]
If $S_{\rho^*(t)}^i = \mbox{low loss}$, $S_{t}^i = \mbox{high loss}$, then \[l_t^o(a_t)=\mbox{trunc}_{[\frac{1}{2},1]}\left(l_{\rho^*(t)}^o(a_{\rho^*(t)})+\xi_t+\epsilon\right)\]
In this case, $\mathbb{P}_t^{0},\mathbb{P}_t^{i}$ are truncated gaussian distributions with mean differed by $\epsilon$, by \lem{trunc}, $\mathcal{D}_{KL}(\mathbb{P}^{0}_t||\mathbb{P}^{i}_t)\leq \frac{\epsilon^2}{2\sigma^2}$. Our claim holds.

\begin{align*}
        \mathcal{D}_{KL}(\mathbb{P}^0\| \mathbb{P}^i) &\leq \frac{\epsilon^2}{2\sigma^2}\sum_{t=1}^T \mathbb{E}_{\mathbb{P}^0}[\mathbb{I}\{S_{t}^{i}\neq S_{\rho^*(t)}^{i}\}]\\
        &\leq \frac{\epsilon^2}{2\sigma^2}\mathbb{E}_{\mathbb{P}^0}\left[\sum_{t=1}^T \mathbb{I}\{S_{t}^{i}\neq S_{\rho^*(t)}^{i}\}\right]\\
        &\leq \frac{\epsilon^2}{2\sigma^2}\mathbb{E}_{\mathbb{P}^0}\left[\sum_{t=1}^T\sum_{\rho^*(t)\leq s<t} \mathbb{I}\{S_{s-1}^{i}\neq S_{s}^{i}\}\right]\\
        &=\frac{\epsilon^2}{2\sigma^2}\mathbb{E}_{\mathbb{P}^0}\left[\sum_{s=1}^T\sum_{t\in \mbox{cut}_{\rho^*}(s)} \mathbb{I}\{S_{s-1}^{i}\neq S_{s}^{i}\}\right]\\&\leq \frac{\epsilon^2}{2\sigma^2}\omega(\rho^*)\mathbb{E}_{\mathbb{P}^0}\left[\sum_{s=1}^T\mathbb{I}\{S_{s-1}^{i}\neq S_{s}^{i}\}\right]\\
        &\leq \frac{\epsilon^2}{2\sigma^2}\omega(\rho^*)\mathbb{E}_{\mathbb{P}^0}\left[\frac{8\epsilon T_i}{1-8\epsilon}\right]=\frac{4\epsilon^3 \omega(\rho^*)\mathbb{E}_{\mathbb{P}^0}[T_i]}{(1-8\epsilon)\sigma^2}
\end{align*}
The last inequality is because $\sum_{s=1}^T\mathbb{I}\{S_{s-1}^{i}\neq S_{s}^{i}\}$ is the times of state switching, and we bound it using \lem{switches}. By Pinsker's inequality, we have
\begin{align*}
    \mathcal{D}_{TV}^\mathcal{F}(\mathbb{P}^0,\mathbb{P}^i)\leq \sqrt{\frac{1}{2}\mathcal{D}_{KL}(\mathbb{P}^0\| \mathbb{P}^i)}\leq (\epsilon/\sigma)\sqrt{\frac{2\omega(\rho^*)\epsilon \mathbb{E}_{\mathbb{P}^0}[T_i]}{1-8\epsilon}}
\end{align*}

\end{proof}

\begin{proof}[Proof of \thm{thm3}.]
By Yao's minimax priniple, we only need to prove that there is a distribution of loss sequence such that any deterministic learner can only achieve $\tilde{\Omega}(K^{1/3}T^{2/3})$ expected regret. The distribution of loss sequence is just the construction described in Section 4.1. 

We first consider the untruncated regret
\begin{align*}
\hat{R}_{T}=\sum_{t=1}^{T}l_{t}'(a_{t})-\min_{y\in\mathcal{A}}\sum_{t=1}^{T}l_{t}'(y)
\end{align*}
Let $T_{i}$ denote the times learner chooses arm $i$. Then $\hat{R}_{T}=\epsilon(T-T_{Z})$ for $Z>0$. Take expectation, we have
\begin{align*}
    \mathbb{E}[\hat{R}_{T}]&=\frac{1}{K+1}\sum_{i=1}^{K}\mathbb{E}[\hat{R}_{T}|Z=i]\\
    &=\frac{1}{K+1}\sum_{i=1}^{K}\epsilon\left(T-\mathbb{E}[T_{i}|Z=i]\right)
\end{align*}
Since 
\begin{align*}
    \left|\mathbb{E}[T_i| Z=i]-\mathbb{E}[T_i| Z=0]\right|&\leq \sum_{t=1}^T \left|\mathbb{P}^0(a_t = i) - \mathbb{P}^i(a_t=i)\right|\\ &\leq T \cdot \mathcal{D}_{TV}(\mathbb{P}^0,\mathbb{P}^i)
\end{align*}
we have
\begin{align*}
    \mathbb{E}[\hat{R}_{T}]&\geq\frac{1}{K+1}\sum_{i=1}^{K}\epsilon\left(T-\mathbb{E}[T_{i}|Z=0]-T\cdot\mathcal{ D}_{TV}^{\mathcal{F}}(\mathbb{P}^{0},\mathbb{P}^{i})\right)\\
    &=\frac{\epsilon KT}{K+1}-\frac{\epsilon}{K+1}\mathbb{E}\left[\sum_{i=1}^K T_i \Big{|}  Z=0\right]-\frac{\epsilon T}{K+1}\sum_{i=1}^K \mathcal{ D}_{TV}^{\mathcal{F}}(\mathbb{P}^{0},\mathbb{P}^{i})\\
    &=\frac{\epsilon(K-1)T}{K+1}-\frac{\epsilon T}{K+1}\sum_{i=1}^{K}\mathcal{D}_{TV}^{\mathcal{F}}(\mathbb{P}^{0},\mathbb{P}^{i})\\
    &\geq\frac{\epsilon (K-1)T}{K+1}-\frac{\epsilon T(\epsilon/\sigma)}{K+1}\sum_{i=1}^K\sqrt{\frac{2\omega(\rho^*)\cdot\epsilon \mathbb{E}_{\mathbb{P}^0}[T_i]}{1-8\epsilon}}
\end{align*}
The last inequality is due to \lem{TV}. Then use the concavity of square root, we have
\begin{align*}
    \mathbb{E}[\hat{R}_{T}]&\geq\frac{\epsilon (K-1)T}{K+1}-\frac{\epsilon T(\epsilon/\sigma)}{K+1}\sqrt{\frac{2\omega(\rho^*)\cdot\epsilon K\mathbb{E}_{\mathbb{P}^0}\left[\sum_{i=1}^K T_i\right]}{1-8\epsilon}}\\
    &\geq\frac{\epsilon (K-1)T}{K+1}-\frac{\epsilon T(\epsilon/\sigma)}{K+1}\sqrt{\frac{2\omega(\rho^*)\cdot\epsilon K T}{1-8\epsilon}}\\
     &\geq\frac{\epsilon}{2}T-2\epsilon^{5/2}\sigma^{-1}K^{-1/2}T^{3/2}\sqrt{\log T + 1}
\end{align*}
When $T\geq 3$, $\sqrt{\log T +1}\leq \sqrt{2\log T}$, and set $\epsilon = c K^{1/3} T^{-1/3}\log^{-1} T$, $\sigma=\frac{1}{16\sqrt{2}}\log^{-1} T$, where $c$ is a constant to be determined later, we get
\begin{align*}
    \mathbb{E}[\hat{R}_{T}]&\geq \left(\frac{c}{2}-64c^{5/2}\right)K^{1/3}T^{2/3}\log^{-1} T
\end{align*}
Let $A$ be the event that all $l_t(a)\in \left[\frac{1}{2},1\right]$. 
\[\mathbb{P}(A)\geq \mathbb{P}\left(\max_{t\in [T]}|W_t|\leq \frac{1}{4}-\epsilon \right)\]
Since $T$ tends to infinity, we can assume $\epsilon\leq \frac{1}{8}$. By \lem{range}, we have
\[\mathbb{P}(A)\geq \mathbb{P}\left(\max_{t\in [T]}|W_t|\leq \frac{1}{4}-\epsilon \right)\geq  \mathbb{P}\left(\max_{t\in [T]}|W_t|\leq \frac{1}{8} \right)\geq  \frac{2}{3} \]
Now we consider the true regret $R_T$
\begin{align*} 
    \mathbb{E}[R_{T}]&=\mathbb{P}(A)\mathbb{E}[R_T|A]+\mathbb{P}(A^c)\mathbb{E}[R_T|A^c]\\&\geq\mathbb{P}(A)\mathbb{E}[\hat{R}_T|A]\\&\geq\mathbb{P}(A)\left(\mathbb{E}[\hat{R}_T]-(1-\mathbb{P}(A))\mathbb{E}[\hat{R}_T|A^c]\right)\\&\geq\frac{2}{3}\left(\mathbb{E}[\hat{R}_T]-\frac{1}{3}\epsilon T\right)\\&=\frac{2}{3}\left(\frac{c}{6}-64c^{5/2}\right)K^{1/3}T^{2/3}\log^{-1} T
\end{align*}
Set $c = \frac{1}{64}$, $\mathbb{E}[R_{T}]\geq \frac{1}{1536}K^{1/3}T^{2/3}\log^{-1} T$.

\end{proof}

\section{Conclusion}
In this paper, we generalize the previous works of bandits with composite anonymous delayed feedback. We consider the non-oblivious loss adversary with bounded memory and the non-oblivious delay adversary. Though the external pseudo-regret incurs $\Omega(T)$ lower bound, for policy regret, we propose a mini-batch wrapper algorithm which can convert any standard non-oblivious bandit algorithm to the algorithm which fits our setting. By applying this algorithm,  we prove a $\mathcal{O}(T^{2/3})$ policy regret bound on $K$-armed bandit and bandit convex optimization, generalizing the results of \cite{cesa2018nonstochastic} and \cite{wang2021adaptive}. We also prove the matching lower bound for $K$-armed bandit problem, and our lower bound works even when the loss sequence is oblivious.

\bibliographystyle{alphaUrlePrint}
\bibliography{NOBsim}
\newpage

\appendix
\section*{Appendix}
\section{Proof of \texorpdfstring{\lem{trunc}}{}}

\begin{lemma}[measure theoretic Jensen's inequality]
\label{lem:jensen}
Let $(\Omega,A,\mu)$ be a probability space. If $g_1, g_2, \cdots, g_n$ is real valued function that is $\mu$-integrable, and $\varphi$ is a convex function on $\mathbb{R}^n$. Then 
\begin{align*}
    \varphi\left(\int_{\Omega}g_1 d\mu,\cdots,\int_{\Omega}g_n d\mu\right) \leq \int_{\Omega} \varphi \left(g_1,\cdots,g_n\right)d\mu
\end{align*}
\end{lemma}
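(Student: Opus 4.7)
The plan is to reduce the integral form of Jensen's inequality to the pointwise subgradient inequality for convex functions. Since $\varphi:\mathbb{R}^n\to\mathbb{R}$ is convex and finite everywhere on $\mathbb{R}^n$, a standard result in convex analysis guarantees that its subdifferential is nonempty at every point; equivalently, the epigraph of $\varphi$ admits a supporting hyperplane at every boundary point.

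First I would set $x_0=\left(\int_{\Omega}g_1\,d\mu,\ldots,\int_{\Omega}g_n\,d\mu\right)\in\mathbb{R}^n$, which is well-defined since each $g_i$ is $\mu$-integrable. Then I would pick any subgradient $v\in\partial\varphi(x_0)$, yielding the affine minorant
\[
\varphi(y)\ \geq\ \varphi(x_0)+\langle v,\,y-x_0\rangle\qquad\text{for every }y\in\mathbb{R}^n.
\]
Next I would substitute $y=(g_1(\omega),\ldots,g_n(\omega))$ to get a pointwise inequality on $\Omega$, and then integrate both sides against $\mu$. Since $v$ is a fixed vector and $\mu$ is a probability measure, the linear term integrates to $\langle v,\,\int g\,d\mu - x_0\rangle=0$, leaving exactly $\varphi(x_0)\leq \int_{\Omega}\varphi(g_1,\ldots,g_n)\,d\mu$, which is the claim.

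The main obstacle is not the inequality chain itself, which is essentially one line, but the measure-theoretic hygiene: I need to make sure that $\omega\mapsto \varphi(g_1(\omega),\ldots,g_n(\omega))$ is measurable and that its Lebesgue integral is well-defined in $(-\infty,+\infty]$ before any integral inequality can be written down. Measurability follows because a real-valued convex function on $\mathbb{R}^n$ is automatically continuous, hence Borel measurable, and composition with the measurable map $\omega\mapsto(g_1(\omega),\ldots,g_n(\omega))$ is then measurable. For the existence of the integral, the affine minorant $\varphi(x_0)+\langle v,g(\omega)-x_0\rangle$ is $\mu$-integrable because the $g_i$ are, and it bounds the negative part of $\varphi\circ g$ from below. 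The one prerequisite I would cite rather than rederive is the nonemptiness of $\partial\varphi(x_0)$ for a real-valued convex function on $\mathbb{R}^n$, a standard consequence of the supporting hyperplane theorem applied to the epigraph of $\varphi$.
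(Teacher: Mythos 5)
Your proposal is correct and follows essentially the same route as the paper's proof: take a subgradient of $\varphi$ at the point $\left(\int_\Omega g_1\,d\mu,\ldots,\int_\Omega g_n\,d\mu\right)$, apply the resulting affine minorant pointwise, and integrate, using that $\mu$ is a probability measure so the linear term vanishes. The additional remarks on measurability and integrability of $\varphi\circ g$ are sound extra hygiene but do not change the argument.
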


\begin{proof}
    Let $\mathbf{v}_0$ be $\left(\int_{\Omega}g_1 d\mu,\int_{\Omega}g_2 d\mu,\cdots,\int_{\Omega}g_n d\mu\right)$, $\mathbf{v}_0$ is a point in $\mathbb{R}^n$. Since $\varphi$ is convex, we can find $\varphi$'s subgradient $\mathbf{g}_0$ at $\mathbf{v}_0$ and $\varphi(\mathbf{v})\geq \langle \mathbf{g}_0,\mathbf{v}\rangle+\varphi(\mathbf{v_0})-\langle \mathbf{g}_0,\mathbf{v}_0\rangle$. Then we have 
    \begin{align*}
        \int_{\Omega} \varphi \left(g_1,g_2,\cdots,g_n\right)d\mu&\geq \int_{\Omega} \langle \mathbf{g}_0,\mathbf{v}\rangle+\varphi(\mathbf{v_0})-\langle \mathbf{g}_0,\mathbf{v}_0\rangle d\mu \\&=\langle \mathbf{g}_0,\mathbf{v}_0\rangle+\varphi(\mathbf{v}_0) -\langle \mathbf{g}_0,\mathbf{v}_0\rangle\\&=\varphi(\mathbf{v}_0)\\&=\varphi\left(\int_{\Omega}g_1 d\mu,\int_{\Omega}g_2 d\mu,\cdots,\int_{\Omega}g_n d\mu\right)
    \end{align*}
\end{proof}

\begin{proof}[Proof of \lem{trunc}]
Consider dominating measure $\lambda=\mathbb{P}+\mathbb{Q}$. Obviously, both $\mathbb{P}$ and $\mathbb{Q}$ are absolutely continuous with respect to $\lambda$ so they have density with respect to $\lambda$ by Radon-Nikodym theorem. Let $p(x),q(x)$ be these density functiones. Let $\phi_p(x),\phi_q(x)$ be the density functions of gaussian measures which have same variance $\sigma$ and means $\mu_p,\mu_q$ respectively, with Lebesgue measure as the dominating measure. Let $\Phi_p(x)=\int_{-\infty}^{x}\phi_p(x) dx$ and $\Phi_q(x)=\int_{-\infty}^{x}\phi_q(x)dx$ be the cumulative distribution function of Gaussians. Then
\begin{align*}
    p(x)=\begin{cases}
\frac{\phi_{p}(x)}{\phi_{p}(x)+\phi_{q}(x)} & x\in(a,b)\\
\frac{\Phi_{p}(a)}{\Phi_{p}(a)+\Phi_{q}(a)} & x=a\\
\frac{1-\Phi_{p}(b)}{2-\Phi_{p}(b)-\Phi_{q}(b)} & x=b
\end{cases}\quad  q(x)=\begin{cases}
\frac{\phi_{q}(x)}{\phi_{p}(x)+\phi_{q}(x)} & x\in(a,b)\\
\frac{\Phi_{q}(a)}{\Phi_{p}(a)+\Phi_{q}(a)} & x=a\\
\frac{1-\Phi_{q}(b)}{2-\Phi_{p}(b)-\Phi_{q}(b)} & x=b
\end{cases}
\end{align*}
Calculate $\mathcal{D}_{KL}(\mathbb{P}\|\mathbb{Q})$ with dominating measure $\lambda$
    \begin{align*}
        \mathcal{D}_{KL}(\mathbb{P}\|\mathbb{Q})&=\int_{\mathbb{R}} p(x)\log\frac{p(x)}{q(x)}d\lambda(x)\\&=\int_{a}^{b}\phi_{p}(x)\log\frac{\phi_{p}(x)}{\phi_{q}(x)}dx+\Phi_{p}(a)\log\frac{\Phi_{p}(a)}{\Phi_{q}(a)}+(1-\Phi_{p}(b))\log\frac{1-\Phi_{p}(b)}{1-\Phi_{q}(b)}
    \end{align*}
Note that KL divergence is convex, by \lem{jensen}
\begin{align*}
    \mathcal{D}_{KL}(\mathbb{P}\|\mathbb{Q})&\leq\int_{a}^{b}\phi_{p}(x)\log\frac{\phi_{p}(x)}{\phi_{q}(x)}dx+\int_{-\infty}^{a}\phi_{p}(x)\log\frac{\phi_{p}(x)}{\phi_{q}(x)}dx+\int_{b}^{\infty}\phi_{p}(x)\log\frac{\phi_{p}(x)}{\phi_{q}(x)}dx\\&=\int_{-\infty}^{\infty}\phi_{p}(x)\log\frac{\phi_{p}(x)}{\phi_{q}(x)}dx=\frac{|\mu_{p}-\mu_{q}|^{2}}{2\sigma^{2}}
\end{align*}
\end{proof}
\end{document}